\documentclass{article}
\usepackage{rescom_preprint,times}
\usepackage{float,capt-of}
\usepackage{booktabs,multirow,amsmath,amssymb,graphicx}
\usepackage[table]{xcolor}
\usepackage{placeins,flafter,amsthm,mathtools,algorithm,algpseudocode}
\usepackage{hyperref,url,xurl}
\hypersetup{pdftitle={ResCom: Activation Residual Compensation for Large Language Model Quantization},
  pdfauthor={Ning Yang, Bohan Cui, Zhuowen Liu, Yuting Wang}}
\newtheorem{proposition}{Proposition}
\newcommand{\method}{\textsc{ResCom}}
\newcommand{\tbd}{\textit{--}}
\algrenewcommand\algorithmicrequire{\textbf{Input:}}
\algrenewcommand\algorithmicensure{\textbf{Output:}}
\renewcommand{\arraystretch}{1.05}
\title{\method{}: Activation Residual Compensation\\for Large Language Model Quantization}
\author{%
\makebox[0.44\textwidth][l]{Ning Yang}\\
{\normalfont Institute of Automation,}\\
{\normalfont Chinese Academy of Sciences}
\And
\makebox[0.44\textwidth][l]{Bohan Cui\setcounter{footnote}{1}\thanks{Corresponding author.}}\\
{\normalfont Fudan University}\\
{\normalfont\texttt{bhcui24@m.fudan.edu.cn}}\\
{\normalfont\texttt{cuibohan010@gmail.com}}
\AND
\makebox[0.44\textwidth][l]{Zhuowen Liu}\\
{\normalfont The Chinese University}\\
{\normalfont of Hong Kong}
\And
\makebox[0.44\textwidth][l]{Yuting Wang}\\
{\normalfont University of Chinese Academy}\\
{\normalfont of Sciences}%
}

\begin{document}
\raggedbottom
\maketitle
\begin{abstract}
Low-bit weight--activation quantization reduces the memory and computation
needed to deploy large language models, but activation rounding can still
degrade accuracy after calibration. Reconstruction from rounded inputs cannot
distinguish activations within the same quantization bin. We propose ResCom, a
low-rank correction that uses the activation rounding residual to compensate
for the resulting layer-output error. An output-aware calibration procedure
selects residual directions by their contribution to output reconstruction
and computes the correction in closed form, without gradient training or
updates to the calibrated weights. On Llama-3-8B with 4-bit weights and
activations, rank-32 attention corrections reduce WikiText-2 perplexity from
7.708 to 7.147, improve MMLU by 2.89 percentage points, and raise five-shot
GSM8K accuracy from 23.96 to 36.16 percent. Evaluations on models from 1B
to 70B parameters and on held-out C4 test the method beyond this setting.
The evaluated FP32 implementation adds 2.25 percent to 4K prefill
time; a separate compact-storage implementation retains a 27.7 percent
peak-memory reduction relative to FP16.
\end{abstract}

\section{Introduction}
\label{sec:intro}
\begingroup
\setlength{\parskip}{3pt}

Large language models (LLMs) continue to improve in capability, but their memory and computational requirements make inference costly~\citep{pope2023efficiently}. Quantization reduces these costs by representing model values with fewer bits. Post-training quantization (PTQ) applies this conversion to pretrained models using limited calibration data, avoiding a full quantization-aware training procedure~\citep{nagel2021whitepaper}. Since linear layers multiply learned weights by input-dependent activations, quantizing both operands enables low-bit matrix multiplication on compatible hardware. With four-bit weights and activations (W4A4), however, few numerical levels represent activation ranges that can be enlarged by outliers~\citep{dettmers2022llmint8}. For a uniform quantizer, a wider range spreads these levels farther apart, so wider intervals of activation values share each rounded value. The layer consequently loses input distinctions that may affect its output.

To reduce this distortion, activation-shaping methods adjust distributions before quantization through scaling or transformations~\citep{xiao2023smoothquant,ashkboos2024quarot}. The resulting low-bit computation can still incur output error, which error-reconstruction methods compensate with compact branches~\citep{zhang2024lqer,zhang2025qera}. For W4A4, SERQ implements reconstruction with a single compensation matrix operating on quantized activations~\citep{park2026serq}. LRC jointly optimizes quantized weights and a correction, whereas ARHQ uses residual statistics to determine a weight split before quantizing the remaining weights~\citep{scetbon2024lrc,wang2026arhq}. Their correction procedures therefore participate in constructing the quantized model. Once a model has been calibrated, preserving its weights and input transformations allows those calibration choices to be reused while fitting an additional correction.

Different activations can produce different layer outputs even when they round to the same vector. A correction conditioned only on that vector, as in SERQ, gives these inputs the same adjustment, leaving their distinct output contributions unresolved. This motivates a correction that accesses the discarded information while retaining the already calibrated computation. How can such a correction recover layer outputs under a limited computation budget?

To address this question, we propose \method{}, which adds a residual-conditioned low-rank correction to an already calibrated quantized model, referred to as the \emph{host model} (Figure~\ref{fig:flow}). \method{} fits the correction to layer-output error while retaining the existing weights, quantization settings, and reconstruction path. We evaluate this post-calibration attachment beyond SERQ using Llama-3-8B checkpoints from PrefixQuant and FlatQuant~\citep{chen2025prefixquant,sun2025flatquant}.

The branch receives the activation rounding residual, the difference between an activation before quantization and the value reconstructed from its quantized representation. This residual records what rounding discards and is formed at the existing input quantizer. In our SERQ instantiation, the frozen linear map includes both the dequantized main-path weights and the existing reconstruction term. Because both inputs are evaluated under the same frozen map, its response to the residual exactly equals the output difference caused by activation rounding. This activation-induced output difference defines our SERQ calibration target. Using the residual isolates the input component absent from the main path and directly matches this target's linear dependence on the discarded information. Evaluating the frozen map on the residual would recover this difference directly, but requires a dense high-precision multiplication. To bound the additional cost, \method{} uses a low-rank approximation implemented by two smaller matrix multiplications, trading recovery capacity for computation and storage. Under this budget, residual magnitude alone overlooks output amplification, while weights alone ignore which residual variations occur; a small residual can still have a large output effect. \method{} therefore fits a regularized reduced-rank regression to output error, using a ridge term to stabilize fitting from limited calibration data and obtaining a closed-form solution without gradient training.

\begin{figure}[!htbp]
\centering
\includegraphics[width=0.9\textwidth]{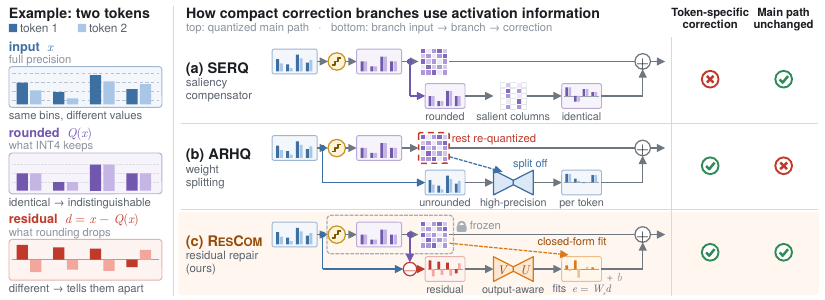}
\small
\caption{Three reconstruction paths. \textbf{(a)} SERQ applies a selected-channel
compensator to rounded activations. \textbf{(b)} ARHQ splits weights before
quantization and uses unrounded activations. \textbf{(c)} ResCom instead feeds
the activation rounding residual to an output-aware low-rank branch fitted after
calibration. 
Appendices~\ref{app:priorart}--\ref{app:repro} formalize these differences.}
\label{fig:flow}
\vspace{-4pt}
\end{figure}

Our contributions are as follows:
\begin{itemize}
\setlength{\itemsep}{1pt}
\setlength{\topsep}{2pt}
\item \textbf{Residual-conditioned correction after calibration.} We propose \method{}, a compact correction that uses activation rounding residuals to recover output-relevant information while preserving the calibrated weights, quantization settings, and existing reconstruction path.
\item \textbf{Output-aware residual fitting.} We formulate residual-to-output calibration as regularized reduced-rank regression, allocating a prescribed rank budget according to layer-output reconstruction error. The resulting closed-form fit calibrates the added branch without updating the host model.
\item \textbf{Accuracy recovery across model scales.} Rank-32 attention correction on query, key, value, and output projections lowers WikiText-2 perplexity on all nine W4A4 SERQ checkpoints (1B--70B). On Llama-3-8B, perplexity falls from 7.708 to 7.147, MMLU improves by 2.89 percentage points, and five-shot GSM8K strict accuracy rises from 23.96\% to 36.16\%.
\end{itemize}

\section{Related Work}
\label{sec:related}

\paragraph{Low-bit quantization and activation shaping.}
Activation shaping improves low-bit accuracy by limiting how outliers enlarge the range represented by a quantizer. \citet{xiao2023smoothquant} shift quantization difficulty from activations to weights through channel scaling in SmoothQuant. Suppressing activation outliers can enlarge weight ranges, making the balance between the two errors important. \citet{shao2024omniquant} optimize this balance by jointly learning equivalent transformations and weight-clipping parameters in OmniQuant. Scaling changes each channel's magnitude while keeping its values within that channel. To spread outlier values across channels, \citet{ashkboos2024quarot} use output-preserving Hadamard rotations in QuaRot. Rotations mix channels without independently rescaling them; \citet{sun2025flatquant} combine mixing and rescaling through learned affine transformations in FlatQuant. Beyond these feature transformations, \citet{chen2025prefixquant} isolate token-associated outliers in prefixed tokens in PrefixQuant, enabling static activation scales. These methods improve how activations fit the available numerical levels, while subsequent rounding still discards input-dependent information. \method{} targets the resulting output error after calibration, preserving the host's transformations and quantization settings.

\paragraph{Quantization error reconstruction.}
Reconstruction methods compensate quantization error with compact branches. \citet{zhang2024lqer} use activation-scaled weight-error decomposition in LQER, but its heuristic scaling does not directly solve expected output-error minimization. \citet{zhang2025qera} address this limitation through an analytical output-error formulation in QERA. Its weight-reconstruction target, however, does not explicitly include the output discrepancy introduced by activation rounding, so minimizing that target alone does not ensure activation-error recovery. Beyond the fitting objective, two-factor execution requires sequential products; when both factors run at low precision, intermediate quantization adds error and computation. To avoid this step, \citet{park2026serq} use a single compensation matrix on selected quantized activation channels. Yet its correction still receives only rounded inputs: activations sharing the same quantized vector receive the same adjustment, leaving their distinct output contributions unresolved. \method{} addresses this input limitation by fitting a residual-driven branch to the activation-rounding output difference under the frozen SERQ map.

\paragraph{Unrounded activations and residual correction.}
Accessing unrounded activations can recover distinctions unavailable from rounded inputs. \citet{scetbon2024lrc} use them in LRC while jointly optimizing quantized weights and a low-rank correction. This coupling means that adding its correction to an existing checkpoint requires revisiting the weight approximation. An analytical alternative is ARHQ: \citet{wang2026arhq} use activation-residual statistics to select a high-precision weight component, avoiding joint fitting of the weights and correction. However, the selected component is removed from the main-path weights before the remainder is quantized. Its construction therefore changes the quantization problem and does not preserve an already calibrated main path. Both methods access unrounded information through constructing the weight approximation, leaving a need for correction fitted independently after calibration. \method{} meets this requirement by using each input's rounding residual to drive an added low-rank branch. Its output-aware fit retains the host weights, quantizer, and existing reconstruction path, allowing the calibrated computation to be reused.

\endgroup

\section{Method}
\label{sec:method}
\begingroup
\setlength{\parskip}{3pt}

The preceding section identifies a limit of corrections that receive only rounded activations: they cannot distinguish inputs mapped to the same quantized value. ResCom addresses this limit after the original quantization pipeline has been calibrated. It passes the information removed by activation rounding to a small correction branch, then fits that branch to the missing layer output. The quantized path and any existing weight reconstruction remain frozen throughout calibration. Figure~\ref{fig:pipeline} shows how the branch is fitted and used.

\begin{figure}[!htbp]
\centering
\includegraphics[width=0.98\textwidth]{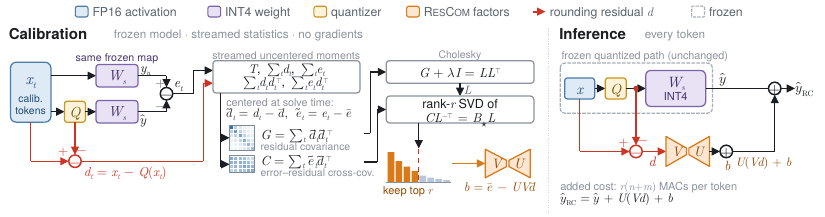}
\small
\setlength{\abovecaptionskip}{2pt}
\vspace{-6pt}
\caption{ResCom calibration and inference. Calibration pairs each activation rounding residual with the output contribution removed by rounding and fits a low-rank correction. At inference, the branch evaluates that correction for the current token and adds it to the frozen quantized output. The shared layer bias is omitted.}
\label{fig:pipeline}
\end{figure}

\subsection{Residual-conditioned correction}
\label{sec:setup}

\paragraph{Recovering the missing input distinction.}
For a layer with input width $n$, let the unrounded activation
$x\in\mathbb{R}^{n}$ denote the input to its calibrated quantizer,
expressed in the quantizer's coordinate system. The quantize--dequantize
operator $Q$ returns the dequantized activation $Q(x)\in\mathbb{R}^{n}$.
Different activations can share the same $Q(x)$ while having different
deviations from that value. A correction conditioned only on $Q(x)$
gives these inputs the same adjustment. ResCom therefore conditions
its branch on the activation rounding residual $d\in\mathbb{R}^{n}$,
defined as
\[
d=x-Q(x).
\]
The residual $d$ records the input information removed from the current
token. It is formed by elementwise subtraction at the quantizer, where
both $x$ and $Q(x)$ are available in the current forward pass.

\paragraph{Defining the correction target.}
The residual $d$ identifies the removed input component; the correction
branch is fitted to the resulting change in the layer output.
We obtain this target by evaluating $x$ and $Q(x)$ with the same frozen
linear map $W_s\in\mathbb{R}^{m\times n}$, where $m$ is the output width.
The map $W_s$ combines the dequantized main-path weights with any existing
linear reconstruction term and is evaluated through the host's existing
components. With the shared layer bias $c\in\mathbb{R}^{m}$, the
unrounded-input output $y_u$, the quantized-input output $\hat y$,
and the target output error $e$ are

\begin{equation}
y_u=W_sx+c,\qquad
\hat y=W_sQ(x)+c,\qquad
e=y_u-\hat y=W_sd.
\label{eq:layer}
\end{equation}
Because both outputs use the same frozen map and bias, $e$ isolates
the output contribution removed by activation rounding.

\paragraph{Adding a limited-rank branch.}
Computing $W_s d$ directly at inference would require a dense
multiplication. ResCom approximates this contribution with a rank-$r$
branch using the factors $V\in\mathbb{R}^{r\times n}$ and
$U\in\mathbb{R}^{m\times r}$. By concentrating this budget on residual
directions that most affect the layer output, the branch can recover
the dominant missing output contributions and thereby reduce
reconstruction error.
An intercept $b\in\mathbb{R}^{m}$ captures a constant output offset,
leaving the rank budget for token-dependent variation. The repaired
layer computes

\begin{equation}
\hat y_{\mathrm{RC}}=\hat y+U(Vd)+b.
\label{eq:repair}
\end{equation}
The factors are fixed after calibration, while $d$ changes with each token. Choosing those factors is therefore a question of which residual variations matter most to the output.

\subsection{Output-aware closed-form calibration}
\label{sec:fit}

The residual identifies what rounding removed, but its largest
variations need not produce the largest output errors: the frozen layer
may suppress some directions and amplify others. We therefore use an
output-reconstruction objective to select the residual directions
represented by the branch. Given $T$ calibration pairs $(d_t,e_t)$
from Eq.~\eqref{eq:layer}, we write the branch matrix as
$B=UV\in\mathbb{R}^{m\times n}$ and solve
\begin{equation}
\min_{\operatorname{rank}(B)\le r,\;b}
\sum_{t=1}^{T}\lVert e_t-Bd_t-b\rVert_2^2+\lambda\lVert B\rVert_F^2,
\qquad \lambda>0.
\label{eq:objective}
\end{equation}
The first term assigns rank to reductions in layer-output error. The ridge term stabilizes the fit when calibration data provide little evidence about a direction. The unpenalized intercept absorbs the mean error, leaving the low-rank factors to model differences between tokens.

\paragraph{Fitting from calibration statistics.}
ResCom collects residuals and their target output errors while running
the frozen model on calibration data. The mean residual
$\bar d\in\mathbb{R}^{n}$ and mean output error
$\bar e\in\mathbb{R}^{m}$ summarize their averages across calibration
tokens. Subtracting these means gives the centered residual
$\tilde d_t\in\mathbb{R}^{n}$ and centered output error
$\tilde e_t\in\mathbb{R}^{m}$:

\begin{equation}
\bar d=\frac{1}{T}\sum_{t=1}^{T}d_t,\qquad
\bar e=\frac{1}{T}\sum_{t=1}^{T}e_t,\qquad
\tilde d_t=d_t-\bar d,\qquad
\tilde e_t=e_t-\bar e.
\label{eq:rescom-centering}
\end{equation}

The centered pairs describe how each token differs from the calibration
averages. They determine the variations used to fit the low-rank map,
while the mean vectors are retained to recover the intercept after
fitting the factors. We summarize these centered pairs using the
unnormalized residual covariance matrix $G\in\mathbb{R}^{n\times n}$ and
the unnormalized output-error--residual cross-covariance matrix
$C\in\mathbb{R}^{m\times n}$:

\begin{equation}
G=\sum_{t=1}^{T}\tilde d_t\tilde d_t^{\top},\qquad
C=\sum_{t=1}^{T}\tilde e_t\tilde d_t^{\top}.
\label{eq:calibration_moments}
\end{equation}

The diagonal entries of $G$ measure the accumulated variation in each
residual coordinate, and its off-diagonal entries describe how residual
coordinates vary together. Each entry of $C$ measures how an output-error
coordinate varies with a residual coordinate. Thus, $G$ describes the
scale and correlations of the available residual variations, while $C$
relates those variations to their output effects. These statistics can
be obtained from accumulated moments without storing all calibration
tokens.

To select directions by their contribution to the output-error objective,
we first account for differences in residual scale and correlations.
We use the Cholesky decomposition $G+\lambda I_n=LL^\top$, where $I_n$
is the $n\times n$ identity matrix and $L$ is a lower-triangular factor.
This change of coordinates converts Eq.~\eqref{eq:objective}, after fitting
the intercept, into a low-rank approximation of $CL^{-\top}$
under the Frobenius norm.
We then compute its singular value decomposition (SVD),
$CL^{-\top}=\Phi\Sigma\Psi^\top$, where the columns of $\Phi$ and $\Psi$
are the left and right singular vectors, and $\Sigma$ contains the singular
values in decreasing order.
With the subscript $r$
denoting these components, the fitted branch is
\begin{equation}
U=\Phi_r\Sigma_r,\qquad
V=\Psi_r^\top L^{-1},\qquad
b=\bar e-UV\bar d.
\label{eq:closed_form}
\end{equation}
This keeps the directions that explain the greatest reduction in the
regularized output-error objective~\citep{izenman1975rrr}. Appendix~\ref{app:proof}
gives the derivation and numerical details.

All layer statistics are collected before any fitted branch is attached. Otherwise, an early correction would change the activations observed by later layers, so their calibration targets would no longer describe the same frozen model. Algorithm~\ref{alg:fit} summarizes this order.

\begin{algorithm}[H]
\caption{Closed-form calibration of ResCom}
\label{alg:fit}
\footnotesize
\begin{algorithmic}[1]
\Require Frozen calibrated model $f_\theta$ with fixed parameters $\theta$;
calibration set $\mathcal D$;
target layers $\mathcal L$; rank $r$; ridge coefficient $\lambda$
\Ensure Model $f_\theta$ augmented with one ResCom branch per target layer
\State Run $\mathcal D$ through frozen $f_\theta$; stream the moments of
$(d_t,e_t)$ for every $\ell\in\mathcal L$
\For{each target layer $\ell\in\mathcal L$}
  \State Compute $\bar d_\ell$, $\bar e_\ell$, $G_\ell$, and $C_\ell$ using
  Eqs.~\eqref{eq:rescom-centering} and~\eqref{eq:calibration_moments}
  \State Compute $L_\ell L_\ell^\top=G_\ell+\lambda I_n$, where $I_n$ is the
  $n\times n$ identity
  \State Compute the rank-$r$ SVD
  $C_\ell L_\ell^{-\top}\approx
  \Phi_{\ell,r}\Sigma_{\ell,r}\Psi_{\ell,r}^\top$
  \State Recover $U_\ell$, $V_\ell$, and $b_\ell$ using
  Eq.~\eqref{eq:closed_form}
\EndFor
\State Attach all fitted branches to $f_\theta$ using Eq.~\eqref{eq:repair}
\State \Return $f_\theta$
\end{algorithmic}
\end{algorithm}

\subsection{Residual alignment under a rank budget}
\label{sec:method_comparison}

The correction input determines which output errors a limited-rank branch
can represent. For the target $e_t=W_s d_t$ in Eq.~\eqref{eq:layer},
the residual directly parameterizes the missing output contribution.
The following result compares residual, unrounded, and rounded inputs
under the same rank budget and target.

\begin{proposition}[Residual alignment]
\label{prop:residual_alignment}
For the same calibration samples, let $q_t=Q(x_t)$,
$d_t=x_t-q_t$, and $e_t=W_s d_t$. For an input sequence
$z=(z_t)_{t=1}^{T}$, define its minimum unregularized correction error as
\begin{equation}
\mathcal R_r(z)
=\min_{\operatorname{rank}(B)\le r,\;b}
\sum_{t=1}^{T}\|e_t-Bz_t-b\|_2^2.
\label{eq:input_rank_error}
\end{equation}
Let $s_i$ be the singular values, in decreasing order, of the centered
target matrix $F=[\widetilde e_1,\ldots,\widetilde e_T]$.
For $0\le r\le\min(m,n,T)$,
\begin{equation}
\mathcal R_r(d)=\sum_{i>r}s_i^2
\le\min\{\mathcal R_r(x),\mathcal R_r(q)\}.
\label{eq:residual_rank_optimum}
\end{equation}
Let $P_r$ be the orthogonal projector onto the leading $r$ left singular
directions of $F$, with $P_0=0$. If $(\widehat B,\widehat b)$ solves
the regularized objective in Eq.~\eqref{eq:objective}, then
\begin{equation}
\sum_{t=1}^{T}\|e_t-\widehat B d_t-\widehat b\|_2^2
\le\mathcal R_r(d)+\lambda\|P_rW_s\|_F^2.
\label{eq:residual_ridge_bound}
\end{equation}
\end{proposition}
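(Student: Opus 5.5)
The argument reduces every case to a centered matrix problem and then uses Eckart--Young. Write $D=[\tilde d_1,\ldots,\tilde d_T]$, and let $X$, $Q$ be the analogous centered matrices for $x_t$ and $q_t$. For fixed $B$, the optimal intercept is $b=\bar e-B\bar z$. Substituting it gives
\[
\mathcal R_r(z)=\min_{\operatorname{rank}(B)\le r}\|F-BZ\|_F^2 .
\]
Every matrix $BZ$ has rank at most $r$, so Eckart--Young gives the lower bound $\mathcal R_r(z)\ge\sum_{i>r}s_i^2$ for every input sequence, in particular for $x$ and $q$.

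\textbf{Equality for the residual input.} Since $e_t=W_sd_t$, centering is linear and $F=W_sD$. Take $B=P_rW_s$, which has rank at most $r$. Then $BD=P_rF$ is the best rank-$r$ approximation of $F$, so $\|F-BD\|_F^2=\sum_{i>r}s_i^2$. Combined with the lower bound, this shows $\mathcal R_r(d)=\sum_{i>r}s_i^2$. The inequality $\mathcal R_r(d)\le\min\{\mathcal R_r(x),\mathcal R_r(q)\}$ then follows directly from the universal lower bound. The case $r=0$ is consistent, since $P_0=0$ and only the intercept remains, leaving error $\|F\|_F^2$.

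\textbf{Ridge bound.} Define $J(B,b)=\sum_t\|e_t-Bd_t-b\|^2+\lambda\|B\|_F^2$, and note that $\widehat b$ is optimal for $\widehat B$. The comparison runs as follows:
\[
\sum_t\|e_t-\widehat Bd_t-\widehat b\|^2\le J(\widehat B,\widehat b)\le J\bigl(P_rW_s,\ \bar e-P_rW_s\bar d\bigr)=\mathcal R_r(d)+\lambda\|P_rW_s\|_F^2 .
\]
The first inequality holds because the penalty is nonnegative. The second holds because $(P_rW_s,\bar e-P_rW_s\bar d)$ is feasible, with rank at most $r$. The final equality reuses the computation from the previous paragraph.

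\textbf{Main obstacle.} The only delicate point is justifying that centering with the intercept exactly reduces the problem to $\|F-BZ\|_F$. I would handle this by expanding the sum around the means, using that the centered columns sum to zero so the cross terms vanish. Rank conventions when $r$ exceeds $\operatorname{rank}(F)$ are harmless, because the trailing $s_i$ are then zero.
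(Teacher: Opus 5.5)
Your proposal is correct and takes essentially the same route as the paper's proof. Both eliminate the intercept by centering, get the universal Eckart--Young lower bound from $\operatorname{rank}(BZ)\le r$, attain it with $B_0=P_rW_s$ via $F=W_sD$, and obtain the ridge bound by comparing $(\widehat B,\widehat b)$ against the feasible pair $(P_rW_s,\bar e-P_rW_s\bar d)$ and then dropping the nonnegative penalty.
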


The complete proof is provided in Appendix~\ref{app:residual-alignment-proof}.

Proposition~\ref{prop:residual_alignment} shows that, for activation-rounding
error on the calibration samples, a residual-conditioned linear branch
attains the optimal unregularized reconstruction under the rank budget.
Its minimum error is no greater than that achievable with unrounded
or quantized activations under the same budget.
Equation~\eqref{eq:residual_ridge_bound} gives the corresponding
reconstruction-error bound for the ridge-regularized fit.
The matched-input controls in Table~\ref{tab:input} and the held-out
evaluations in Section~\ref{sec:exp} examine how this alignment translates
into model-level improvements.

\subsection{Integration and cost}
\label{sec:method_cost}

ResCom attaches at the input quantizer of a chosen linear projection. It forms the residual before the unrounded activation is discarded and adds the branch output after the existing linear computation. Scaling, rotation, or permutation introduced by the host pipeline is reflected in the coordinate system of $x$; ResCom does not need to undo those transformations. The branches can be attached to selected attention or feed-forward projections after calibration. The default configuration repairs attention projections, and Section~\ref{sec:ablation} evaluates other placements.

For an $n$-input, $m$-output layer, calibration stores $O(n^2+mn)$ statistics, independent of the number of calibration tokens; layers can be processed in groups to limit memory. This work is performed once, without gradients or updates to the quantized weights. At inference, forming the residual requires an $n$-element subtraction, and the branch uses $r(n+m)$ multiply-accumulates per token. Its factors and intercept store $r(n+m)+m$ scalars. The chosen rank therefore sets both the capacity of the output correction and its added inference cost.

\endgroup

\FloatBarrier
\begingroup
\setlength{\parskip}{4pt}
\setlength{\textfloatsep}{4pt plus 1pt minus 1pt}
\setlength{\intextsep}{4pt plus 1pt minus 1pt}
\setlength{\floatsep}{4pt plus 1pt minus 1pt}
\section{Experiments}
\label{sec:exp}
\subsection{Experimental setup}
\label{sec:setup-exp}
\paragraph{Models and comparisons.}
We evaluate Llama-3/3.2 1B--8B~\citep{grattafiori2024llama3}, LLaMA-2 7B/13B/70B~\citep{touvron2023llama2}, Mistral-7B-v0.3~\citep{jiang2023mistral}, and Gemma-3-1B/4B~\citep{gemma2025gemma3}. SERQ provides the main quantized checkpoints; every ResCom result is paired
with our evaluation before repair. PrefixQuant and FlatQuant test transfer to
other pipelines. W4A4 is the main setting; W4A8 appears in
Appendix~\ref{app:a8}. Within each pair, the checkpoint, tokenizer, evaluation
harness, and test examples are fixed, so the comparison isolates the fitted
branch rather than a change in quantization or evaluation protocol.

\paragraph{Calibration and metrics.}
The rank-32 attention branch uses 128 WikiText-2 windows of 2048 tokens and
$\lambda=0.01\operatorname{tr}(G)/n$. 
We evaluate language modeling with WikiText-2~\citep{merity2017pointer} and held-out C4~\citep{raffel2020t5} perplexity; general task performance with an eight-task zero-shot mean and 57-subject MMLU~\citep{hendrycks2021mmlu}; and reasoning and long-context generation with five-shot GSM8K~\citep{cobbe2021gsm8k} and three LongBench tasks~\citep{bai2024longbench}. Evaluation harness details~\citep{gao2024lmeval} and protocols appear in Appendix~\ref{app:accuracy}.

\subsection{Main comparison}
\label{sec:main-results}
\paragraph{Accuracy after residual repair.}
To test whether residual-conditioned correction improves an already
calibrated W4A4 model across model sizes and families, we compare each
repaired model with its paired quantized checkpoint.
Tables~\ref{tab:main}--\ref{tab:additional} show lower PPL across all evaluated
backbones. On Llama-3-8B, PPL falls from 7.708 to 7.147, MMLU gains 2.89
points, and zero-shot gains 1.91 points; Gemma-3-4B MMLU gains 1.62 points.
LLaMA-2-70B also improves in PPL and both accuracy aggregates. 
These paired improvements show that activation residuals remain useful for accuracy recovery after W4A4 calibration across model scales. Figure~\ref{fig:ppl} further shows that residual repair narrows the perplexity gap to full precision on Llama-3-8B and reduces perplexity on both corpora across the backbones shown.

\begin{table}[H]
\centering\scriptsize
\setlength{\tabcolsep}{1.8pt}
\renewcommand{\arraystretch}{0.84}
\caption{W4A4 WikiText-2 PPL and task accuracy. $\dagger$: published by
SERQ~\citep{park2026serq}; other rows are our evaluations. $\ddagger$: nominal
factor cost. Zero-shot averages eight tasks; bold compares paired rows.}
\label{tab:main}
\begin{tabular}{llccc ccc ccc ccc}
\toprule
& & \#Eff. & Train. & MAC
& \multicolumn{3}{c}{LLaMA-2 7B} & \multicolumn{3}{c}{LLaMA-2 70B}
& \multicolumn{3}{c}{LLaMA-3 8B} \\
\cmidrule(lr){6-8}\cmidrule(lr){9-11}\cmidrule(lr){12-14}
\#Bits & Method & bits/wt. & free & overhead
& PPL$\downarrow$ & 0-shot$\uparrow$ & MMLU$\uparrow$
& PPL$\downarrow$ & 0-shot$\uparrow$ & MMLU$\uparrow$
& PPL$\downarrow$ & 0-shot$\uparrow$ & MMLU$\uparrow$ \\
\midrule
FP16 & Full precision$^\dagger$ & 16 & \checkmark & --
& 5.47 & 64.09 & 41.83 & 3.53 & 70.54 & 65.44 & 6.13 & 67.16 & 62.13 \\
\midrule
\multirow{6}{*}{W4A4}
 & SmoothQuant$^\dagger$        & ${\sim}4$ & \checkmark & --     & 1.51e4 & 35.44 & 26.04 & -- & -- & -- & 4.71e5 & 36.34 & 25.17 \\
 & SmoothQuant, g128$^\dagger$  & 4.13 & \checkmark & --          & 7.49 & 57.15 & 30.40 & -- & -- & -- & 17.26 & 48.97 & 29.30 \\
 & QuaRot$^\dagger$             & ${\sim}4$ & \checkmark & 19.8\% & 6.15 & 59.53 & 33.58 & -- & -- & -- & 8.41 & 59.12 & 47.29 \\
 & SpinQuant$^\dagger$          & ${\sim}4$ & $\times$ & 19.8\%   & 6.00 & 61.00 & 34.80 & -- & -- & -- & 8.26 & 61.75 & 49.93 \\
\cmidrule(l){2-14}
 & SERQ             & 4.24 & \checkmark & 18.7\%      & 5.939 & \textbf{62.52} & 37.49 & 3.638 & 69.59 & 63.72 & 7.708 & 62.98 & 54.69 \\
 \rowcolor{gray!12}
 & \textbf{\method{}}       & 4.30$^\ddagger$ & \checkmark & 0.39\%$^\ddagger$   & \textbf{5.847} & 62.30 & \textbf{38.25} & \textbf{3.597} & \textbf{70.00} & \textbf{64.16} & \textbf{7.147} & \textbf{64.89} & \textbf{57.58} \\
\bottomrule
\end{tabular}

\vspace{3pt}
\begin{tabular}{llccc ccc ccc ccc}
\toprule
& & \#Eff. & Train. & MAC
& \multicolumn{3}{c}{LLaMA-2 13B} & \multicolumn{3}{c}{LLaMA-3.2 1B} & \multicolumn{3}{c}{LLaMA-3.2 3B} \\
\cmidrule(lr){6-8}\cmidrule(lr){9-11}\cmidrule(lr){12-14}
\#Bits & Method & bits/wt. & free & overhead
& PPL$\downarrow$ & 0-shot$\uparrow$ & MMLU$\uparrow$
& PPL$\downarrow$ & 0-shot$\uparrow$ & MMLU$\uparrow$
& PPL$\downarrow$ & 0-shot$\uparrow$ & MMLU$\uparrow$ \\
\midrule
FP16 & Full precision$^\dagger$ & 16 & \checkmark & --
& 4.88 & 66.53 & 52.04 & 9.75 & 54.82 & 36.76 & 7.81 & 62.66 & 54.06 \\
\midrule
\multirow{6}{*}{W4A4}
  &  SmoothQuant$^\dagger$         &  ${\sim}4$  &  \checkmark  &  --      &  1.32e4 & 34.53 & 23.85 &  1.53e5  &  35.59  &  24.37  &  3.73e4  &  35.72  &  23.41  \\
  &  SmoothQuant, g128$^\dagger$   &  4.13  &  \checkmark  &  --           &  6.31 & 61.28 & 39.83 &  69.22  &  40.04  &  24.43  &  53.33  &  44.17  &  27.31  \\
  &  QuaRot$^\dagger$              &  ${\sim}4$  &  \checkmark  &  19.8\%  &  5.41 & 62.55 & 47.25 &  13.17  &  50.03  &  26.64  &  9.73  &  55.76  &  44.75  \\
  &  SpinQuant$^\dagger$           &  ${\sim}4$  &  $\times$  &  19.8\%    &  5.20 & 64.80 & 47.80 &  13.47  &  48.95  &  26.38  &  10.15  &  56.88  &  42.42  \\
\cmidrule(l){2-14}
  &  SERQ              &  4.24  &  \checkmark  &  18.7\%       &  5.227 & 66.08 & 49.20 &  12.461  &  50.93  &  31.05  &  9.177  &  58.98  &  47.96  \\
 \rowcolor{gray!12}
  &  \textbf{\method{}}        &  4.30$^\ddagger$  &  \checkmark  &  0.39\%$^\ddagger$    &  \textbf{5.148} & \textbf{66.22} & \textbf{49.35} &  \textbf{12.071}  &  \textbf{51.02}  &  \textbf{31.21}  &  \textbf{8.964}  &  \textbf{60.14}  &  \textbf{49.29}  \\
\bottomrule
\end{tabular}
\end{table}

\begin{table}[H]
\centering
\caption{Cross-family controls and transfer. \textbf{(a)} Unquantized precision
is FP16 for Mistral and BF16 for Gemma; W4A4 is direct rounding without an
added algorithm. Accuracy is in percent; zero-shot averages eight tasks and
MMLU covers 57 subjects. \textbf{(b)} C4 is unseen during WikiText-2 fitting;
$\rho_{32}$ is the mean layerwise fit reduction, as defined in Appendix~A,
Eq.~\eqref{eq:rho}. Bold compares each SERQ/ResCom pair.}
\label{tab:additional}
\label{tab:backbones}
\vspace{2pt}
\begin{minipage}[t]{0.50\textwidth}
\centering
\textbf{(a) Cross-family accuracy}\par\vspace{2pt}
\tiny
\setlength{\tabcolsep}{1.45pt}
\renewcommand{\arraystretch}{0.86}
\begin{tabular}{@{}llrrrr@{}}
\toprule
Model & Method & WT2 & C4 & 0-shot & MMLU\\
\midrule
\multirow{4}{*}{\shortstack[l]{Mistral-7B\\v0.3}}
 & FP16 & 5.522 & 8.480 & 68.77 & 59.66\\
 & W4A4 & 5.77e3 & 3.69e3 & 35.32 & 24.43\\
 & SERQ & 5.663 & 8.244 & 67.10 & 56.12\\
 & ResCom & \textbf{5.617} & \textbf{8.188} & \textbf{67.13} & \textbf{56.73}\\
\midrule
\multirow{4}{*}{\shortstack[l]{Gemma-3-1B\\PT}}
 & BF16 & 14.088 & 19.980 & 56.69 & 24.81\\
 & W4A4 & 1.68e4 & 1.36e4 & 35.50 & 23.99\\
 & SERQ & 17.332 & 23.895 & 53.02 & 24.60\\
 & ResCom & \textbf{17.268} & \textbf{21.850} & \textbf{53.17} & \textbf{24.80}\\
\midrule
\multirow{4}{*}{\shortstack[l]{Gemma-3-4B\\PT}}
 & BF16 & 10.695 & 16.755 & 66.63 & 57.83\\
 & W4A4 & 8.47e4 & 6.40e4 & 35.22 & 24.75\\
 & SERQ & 12.522 & 17.347 & 62.12 & 50.55\\
 & ResCom & \textbf{12.191} & \textbf{16.557} & \textbf{62.96} & \textbf{52.17}\\
\bottomrule
\end{tabular}
\end{minipage}\hfill
\begin{minipage}[t]{0.49\textwidth}
\centering
\textbf{(b) Paired corpus transfer}\par\vspace{2pt}
\small
\setlength{\tabcolsep}{1.25pt}
\renewcommand{\arraystretch}{0.94}
\begin{tabular}{@{}lrrrrr@{}}
\toprule
& \multicolumn{2}{c}{WikiText-2} & \multicolumn{2}{c}{C4} & \\
\cmidrule(lr){2-3}\cmidrule(lr){4-5}
Backbone & SERQ & ResCom & SERQ & ResCom & $\rho_{32}$\\
\midrule
Llama-3-8B & 7.708 & \textbf{7.147} & 11.192 & \textbf{10.349} & .294\\
LLaMA-2 7B & 5.939 & \textbf{5.847} & 7.432 & \textbf{7.315} & .271\\
LLaMA-2 13B & 5.227 & \textbf{5.148} & 6.693 & \textbf{6.634} & .270\\
Llama-3.2-3B & 9.177 & \textbf{8.964} & 12.688 & \textbf{12.317} & .282\\
Llama-3.2-1B & 12.461 & \textbf{12.071} & 17.227 & \textbf{16.712} & .355\\
Mistral-7B-v0.3 & 5.663 & \textbf{5.617} & 8.244 & \textbf{8.188} & .343\\
\bottomrule
\end{tabular}
\end{minipage}
\end{table}

ResCom further lowers WikiText-2 and C4 perplexity on all three additional backbones after SERQ calibration has recovered usable performance from direct W4A4 rounding. 
Mistral improves all four aggregates, Gemma-3-4B gains 0.84 zero-shot and 1.62 MMLU points, and Gemma-3-1B lowers C4 from 23.895 to 21.850.
Together with the Llama results in Table~\ref{tab:main}, these paired gains
demonstrate the cross-family applicability of residual-conditioned
repair under a common rank-32 attention configuration. Across the
evaluated Llama, Mistral, and Gemma families, the rounding residual
remains useful after host calibration, providing a common correction
signal for further recovery of quantized model quality.
\begin{figure}[H]
\centering
\includegraphics[width=0.85\textwidth]{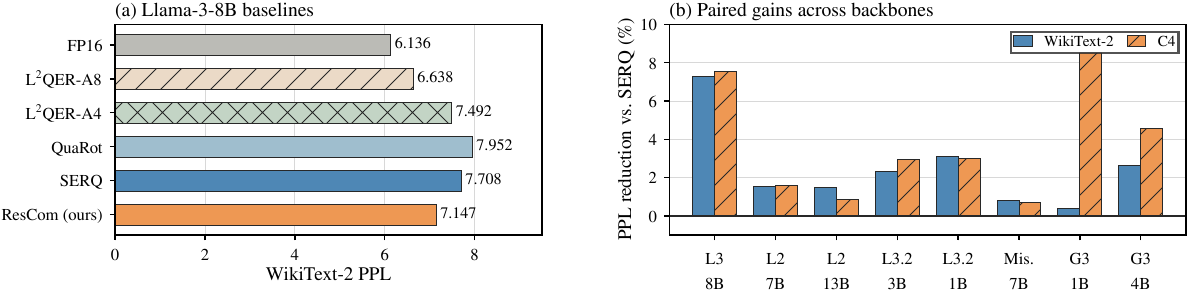}
\setlength{\abovecaptionskip}{2pt}
\caption{Perplexity comparisons. \textbf{(a)} Reproduced Llama-3-8B baselines;
L$^2$QER uses MXINT and QuaRot/SERQ use INT4 simulation. \textbf{(b)} Paired
W4A4 reductions across backbones. Table~\ref{tab:main} identifies published values.}
\label{fig:ppl}
\end{figure}

\paragraph{Transfer beyond the calibration setting.}
Table~\ref{tab:transfer} shows transfer in two directions. A WikiText-2 fit
lowers held-out C4 PPL on all five Llama backbones without refitting, including
11.192 to 10.349 on Llama-3-8B. The same construction improves SERQ,
PrefixQuant, and FlatQuant from 7.708/7.860/6.972 to 7.147/7.605/6.960. Thus the
branch transfers across corpora and attaches after distinct host pipelines;
targets are defined in Appendix~\ref{app:repro}.
Together, these results show that ResCom's accuracy
recovery extends beyond the calibration corpus and
can complement different PTQ pipelines while preserving
their calibrated weights and quantizers.

\begin{table}[H]
\centering
\caption{Transfer tests. \textbf{(a)} C4 PPL after WikiText-2 calibration,
without refitting. \textbf{(b)} WikiText-2 PPL on Llama-3-8B under three host
pipelines; only the correction changes within each pair.}
\label{tab:transfer}
\vspace{2pt}
\begin{minipage}[t]{0.66\textwidth}
\centering
\textbf{(a) Cross-corpus transfer}\par\vspace{2pt}
\scriptsize
\setlength{\tabcolsep}{2.4pt}
\begin{tabular}{@{}lrr@{\qquad}lrr@{}}
\toprule
Backbone & Host & ResCom & Backbone & Host & ResCom\\
\midrule
Llama-3-8B & 11.192 & \textbf{10.349} & Llama-3.2-3B & 12.688 & \textbf{12.317}\\
LLaMA-2 7B & 7.432 & \textbf{7.315} & Llama-3.2-1B & 17.227 & \textbf{16.712}\\
LLaMA-2 13B & 6.693 & \textbf{6.634} & & & \\
\bottomrule
\end{tabular}
\end{minipage}\hfill
\begin{minipage}[t]{0.32\textwidth}
\centering
\textbf{(b) Cross-pipeline transfer}\par\vspace{2pt}
\scriptsize
\setlength{\tabcolsep}{2.8pt}
\begin{tabular}{@{}lrr@{}}
\toprule
Pipeline / sites & Host & ResCom\\
\midrule
SERQ / attention & 7.708 & \textbf{7.147}\\
PrefixQuant / output & 7.860 & \textbf{7.605}\\
FlatQuant / output+down & 6.972 & \textbf{6.960}\\
\bottomrule
\end{tabular}
\end{minipage}
\end{table}

\paragraph{Reasoning and long-context generation.}
Strict GSM8K rises from 23.96\% to 36.16\% on Llama-3-8B and from 13.42\%
to 16.91\% on Llama-3.2-3B; both improve on all three LongBench tasks
(Table~\ref{tab:generation}). Across the two Gemma models, seven of the ten
reported metrics improve: Gemma-3-1B improves all three LongBench tasks, while
Gemma-3-4B improves GSM8K, QMSum, and RepoBench-P. 
These results show that activation residual recovery can translate into
gains in reasoning and long-context generation, extending its benefit
beyond perplexity.
\begin{table}[H]
\centering\scriptsize
\setlength{\tabcolsep}{3.8pt}
\renewcommand{\arraystretch}{0.92}
\caption{W4A4 reasoning and long-context generation. $\dagger$: published
SERQ values~\citep{park2026serq}; other SERQ rows are our paired evaluations.
GSM8K uses five shots; LongBench uses full sets with a 7500-token budget.}
\label{tab:generation}
\begin{tabular}{lllccccc}
\toprule
& & & \multicolumn{2}{c}{GSM8K 5-shot $\uparrow$} & \multicolumn{3}{c}{LongBench $\uparrow$}\\
\cmidrule(lr){4-5}\cmidrule(lr){6-8}
Model & Bits & Method & Flexible & Strict & QMSum & SAMSum & RepoBench-P\\
\midrule
\multirow{4}{*}{Llama-3-8B}
& FP16 & Full precision$^\dagger$ & 48.07 & 47.69 & 23.43 & 46.26 & 66.56\\
\cmidrule(l){2-8}
& W4A4 & L$^2$QER$^\dagger$ & 7.96 & 7.66 & 0.09 & 1.98 & 5.65\\
\cmidrule(l){2-8}
& W4A4 & SERQ & 24.56 & 23.96 & 21.12 & 42.31 & 61.03 \\
\rowcolor{gray!12}
&  & ResCom & \textbf{36.54} & \textbf{36.16} & \textbf{22.96} & \textbf{44.40} & \textbf{65.04} \\
\midrule
\multirow{4}{*}{Llama-3.2-3B}
& FP16 & Full precision$^\dagger$ & 26.08 & 25.93 & 23.94 & 42.98 & 64.42\\
\cmidrule(l){2-8}
& W4A4 & L$^2$QER$^\dagger$ & 3.56 & 2.73 & 14.92 & 33.87 & 41.99\\
\cmidrule(l){2-8}
& W4A4 & SERQ & 13.80 & 13.42 & 20.67 & 41.03 & 62.39 \\
\rowcolor{gray!12}
&  & ResCom & \textbf{17.21} & \textbf{16.91} & \textbf{21.72} & \textbf{41.17} & \textbf{62.64} \\
\midrule
Gemma-3-1B-PT & W4A4 & SERQ & \textbf{2.12} & \textbf{1.36} & 14.15 & 6.30 & 42.78 \\
\rowcolor{gray!12}
 & & ResCom & 1.82 & 0.99 & \textbf{15.10} & \textbf{7.33} & \textbf{43.92} \\
\midrule
Gemma-3-4B-PT & W4A4 & SERQ & 16.22 & 16.00 & 16.25 & \textbf{16.73} & 53.36 \\
\rowcolor{gray!12}
 & & ResCom & \textbf{23.05} & \textbf{22.82} & \textbf{20.75} & 8.17 & \textbf{59.36} \\
\bottomrule
\end{tabular}
\end{table}

\subsection{Ablation and sensitivity analysis}
\label{sec:ablation}
\begin{table}[H]
\centering
\renewcommand{\arraystretch}{0.92}
\begin{minipage}[t]{0.63\textwidth}
\centering\scriptsize
\caption{Matched-rank input ablation. Each feature uses the same target and
relative ridge rule. Columns give PPL and mean normalized regularized fit
reduction $\rho_r$ (Appendix~\ref{app:proof}).}
\label{tab:input}
\setlength{\tabcolsep}{3pt}
\begin{tabular}{lccccc}
\toprule
& rounded act. & \multicolumn{2}{c}{unrounded activation}
& \multicolumn{2}{c}{rounding residual} \\
\cmidrule(lr){3-4}\cmidrule(lr){5-6}
rank & PPL & PPL & $\rho_r$ & PPL & $\rho_r$ \\
\midrule
\phantom{00}8 & 8.210 & 7.241 & 0.015 & \textbf{7.203} & 0.147 \\
\phantom{0}16 & 8.299 & 7.237 & 0.018 & \textbf{7.167} & 0.211 \\
\phantom{0}32 & 8.327 & 7.242 & 0.021 & \textbf{7.147} & 0.294 \\
\phantom{0}64 & 8.362 & 7.235 & 0.025 & \textbf{7.128} & 0.387 \\
128           & 8.384 & 7.227 & 0.028 & \textbf{7.110} & 0.493 \\
256           & 8.417 & 7.229 & 0.032 & \textbf{7.085} & 0.623 \\
\midrule
\multicolumn{6}{l}{\emph{Mistral-7B-v0.3, one rank; unrepaired 5.663}} \\
\phantom{0}32 & 5.787 & 5.652 & 0.028 & \textbf{5.617} & 0.343 \\
\bottomrule
\end{tabular}
\end{minipage}\hfill
\begin{minipage}[t]{0.32\textwidth}
\centering\scriptsize
\caption{Attention rank sensitivity on Llama-3.2 at W4A4. Rank zero is unmodified.}
\label{tab:rank}
\setlength{\tabcolsep}{5pt}
\begin{tabular}{lrr}
\toprule
Rank & 1B & 3B \\
\midrule
0 & 12.46 & 9.18 \\
8 & 12.22 & 9.03 \\
16 & 12.16 & 9.02 \\
32 & 12.07 & 8.96 \\
64 & 11.99 & 8.92 \\
128 & 11.91 & 8.90 \\
256 & 11.81 & 8.87 \\
\bottomrule
\end{tabular}
\end{minipage}
\end{table}

\paragraph{Which input should the correction use?}
To test whether the gains come from residual conditioning rather than
additional branch capacity, we compare rounded activations, unrounded
activations, and rounding residuals under matched ranks, parameter counts,
and factor-MAC budgets.
At fixed rank, sites, target, and fit, the residual input beats the unrounded
input at every rank and the rounded input by a wide margin
(Table~\ref{tab:input}); Mistral follows the same order. The rounded input loses
within-bin distinctions, whereas the unrounded activation mixes the retained
$Q(x)$ with the missing $d$. Feeding $d$ isolates information absent from the
main path, explaining the matched-capacity advantage.

\begin{figure}[H]
\centering
\includegraphics[width=0.9\textwidth,trim=0bp 1.5bp 0bp 0bp,clip]{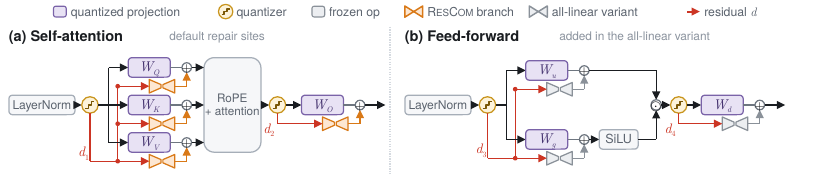}
\setlength{\abovecaptionskip}{2pt}
\caption{Repair sites. Orange marks default attention branches; gray marks added
feed-forward branches. Residuals $d_1$--$d_4$ follow the quantizers; $Q,K,V,O$
and $u,g,d$ label the projections.}
\label{fig:sites}
\end{figure}
\vspace{-8pt}
\begin{table}[H]
\centering
\scriptsize
\renewcommand{\arraystretch}{0.88}
\setlength{\abovecaptionskip}{2pt}
\setlength{\belowcaptionskip}{2pt}
\caption{Calibration stability and repair scope. \textbf{(a)} Mean
$\Delta\mathrm{PPL}=\mathrm{PPL}_{\mathrm{host}}-\mathrm{PPL}_{\mathrm{repaired}}$
with sample s.d. across fixed-evaluation calibration draws.
\textbf{(b)} Llama-3-8B choices; the channel control matches the default branch's
parameter and factor-MAC budgets.}
\label{tab:sensitivity}
\begin{minipage}[t]{0.47\textwidth}
\vspace{0pt}\centering
\textbf{(a) Calibration stability}\par\vspace{1pt}
\setlength{\tabcolsep}{3pt}
\begin{tabular}{lcr}
\toprule
Backbone & Draws & $\Delta$ PPL \\
\midrule
Llama-3-8B & 4 & $0.5604\pm0.0059$ \\
LLaMA-2 7B & 3 & $0.0916\pm0.0030$ \\
Mistral-7B-v0.3 & 4 & $0.0431\pm0.0024$ \\
\bottomrule
\end{tabular}
\end{minipage}\hfill
\begin{minipage}[t]{0.49\textwidth}
\vspace{0pt}\centering
\textbf{(b) Repair configuration}\par\vspace{1pt}
\setlength{\tabcolsep}{4pt}
\begin{tabular}{@{}lr@{\hskip 14pt}lr@{}}
    \toprule
    Repaired sites & PPL $\downarrow$ & Repaired sites / basis & PPL $\downarrow$ \\
    \midrule
    Bias only & 7.715 & Attention (default) & 7.147 \\
    Output & 7.672 & All linear & \textbf{7.070} \\
    Down & 7.624 & Budget-matched channels & \textbf{7.135} \\
    Gate + up & 7.688 & & \\
    \bottomrule
    \end{tabular}
\end{minipage}
\end{table}

\paragraph{Where and how much to repair.}
To identify effective repair locations, we compare corrections attached
to different attention and feed-forward projection groups under the
same calibration protocol.
Attention-only repair provides most of the gain; all-linear repair reaches
7.070 PPL with a larger budget (Figure~\ref{fig:sites},
Table~\ref{tab:sensitivity}(b), and Appendix Table~\ref{tab:sites}). 
To examine the effect of correction capacity, we vary the rank on Llama-3.2-1B and Llama-3.2-3B. Higher ranks generally reduce perplexity, with smaller incremental gains at larger ranks (Table~\ref{tab:rank}).
The budget-matched selected-channel branch reaches 7.135 PPL,
close to 7.147 for dense rank 32.
Together with the matched-input ablation, this result supports
residual conditioning as the key source of improvement across
the two correction parameterizations
(Appendix~\ref{app:budget}).

\paragraph{Sensitivity to calibration data.}
To assess sensitivity to the calibration data, we vary the calibration
draw and number of calibration windows while keeping the evaluation
protocol fixed.
Across repeated calibration draws, PPL reductions keep the same sign and sample
standard deviations stay below 0.006 (Table~\ref{tab:sensitivity}(a); Appendix
Table~\ref{tab:seeds}). On Llama-3-8B, 32/128/512 calibration windows yield
7.139/7.147/7.137 PPL (Appendix Table~\ref{tab:calib}).
Appendices~\ref{app:proof} and~\ref{app:a8} relate $\rho_r$ to absolute error
scale and downstream accuracy.
Together, these results indicate that the gains are stable across the
tested calibration draws and sample counts, supporting effective repair
with a small calibration set.

\subsection{Efficiency and implementation analysis}
\label{sec:cost-results}
We separately measure FP32 simulation, compact storage, and native MXFP4;
Appendix~\ref{app:deploy} gives kernel details.
\paragraph{One-time calibration cost.}
To assess the cost of adding ResCom to an existing quantized model,
we measure moment collection, fitting, and diagnostics on one B200.
Rank-32 attention repair takes 58--1353 seconds across the four
1B--13B models in Table~\ref{tab:calibtime}.
This one-time fitting procedure recovers accuracy without
gradient optimization or repeating the host quantization process.
\vspace{-3pt}
\paragraph{Prefill and decoding latency on B200.}
In FP32 simulation (Figure~\ref{fig:efficiency_latency}), ResCom adds
2.25\% to 4K prefill (1342.32 to 1372.46\,ms) while PPL falls from 7.708 to
7.147; cached decoding rises from 76.79 to 86.11\,ms/token.
The measured prefill increment shows that residual repair can recover
accuracy with limited additional cost during prompt processing.

\vspace{-3pt}
\paragraph{Peak memory and accuracy with compact weight storage.}
A separate compact implementation (Figure~\ref{fig:efficiency_memory})
uses 14.233\,GiB versus 19.686\,GiB for FP16 at 4096+32 tokens. The branch adds
0.103\,GiB (0.73\%) over SERQ while improving MMLU and zero-shot accuracy.
Residual repair therefore retains the memory savings of compact weight
storage while improving downstream accuracy.

\vspace{-3pt}
\paragraph{Native low-bit execution.}
Native MXFP4 repair lowers full WikiText-2 PPL by 0.132 across 224 projections,
adding 0.103\,GiB and 3.7\% latency over plain MXFP4 (Appendix Table~\ref{tab:native_mxfp4}).
Both variants use less memory and take longer than BF16.
These results demonstrate that residual correction can be integrated into
native low-bit computation while improving language-modeling performance.

\begin{figure}[H]
\centering
\vspace{-3pt}
\begin{minipage}[t]{0.4\textwidth}
  \centering
  \includegraphics[width=\linewidth,trim=0bp 5bp 0bp 0bp,clip]{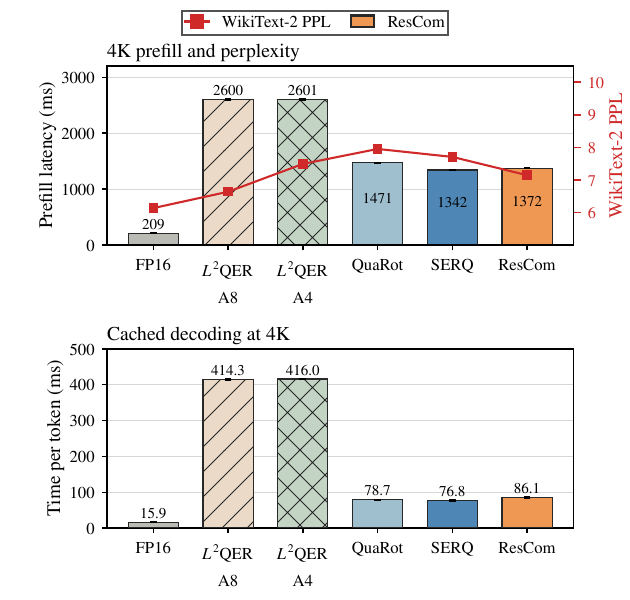}\par
  \vspace{0pt}
  {\footnotesize (a) Latency and perplexity\par}
\end{minipage}\hfill
\begin{minipage}[t]{0.4\textwidth}
  \centering
  \includegraphics[width=\linewidth,trim=0bp 5bp 0bp 0bp,clip]{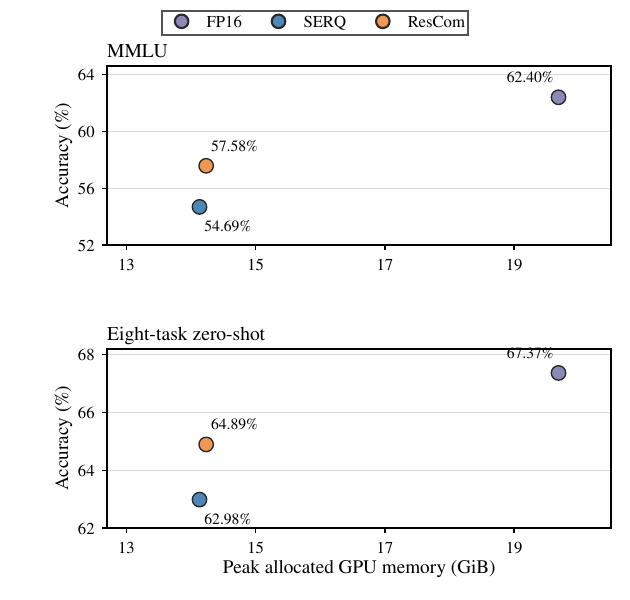}\par
  \vspace{0pt}
  {\footnotesize (b) Memory and accuracy\par}
\end{minipage}
\setlength{\abovecaptionskip}{2pt}
\caption{Efficiency and accuracy of Llama-3-8B on B200.
(a) 4K prefill latency and WikiText-2 perplexity (top), and cached
 decoding time per token (bottom), at batch size one. Quantized
 configurations use FP32 simulation; the FP16 reference uses FP16
 computation.
(b) MMLU (top) and eight-task zero-shot accuracy (bottom) versus peak
 allocated memory in a separate compact-storage experiment with 4096
 input and 32 generated tokens. Compact quantized weights are
 dequantized layer by layer for FP32 GEMM.}
\label{fig:efficiency_summary}
\makeatletter
\begingroup
\edef\@currentlabel{\thefigure(a)}
\label{fig:efficiency_latency}
\edef\@currentlabel{\thefigure(b)}
\label{fig:efficiency_memory}
\endgroup
\makeatother
\end{figure}

\vspace{-10pt}
\section{Conclusion}
\label{sec:conclusion}
\vspace{-4pt}
Low-bit activation quantization creates token-specific output errors that are
no longer identifiable from rounded activations alone. We introduced ResCom, a
low-rank branch conditioned on the rounding residual and fitted to layer-output
error by closed-form reduced-rank regression, while leaving the calibrated
weights and main quantized path unchanged. Across the evaluated 1B to 70B
backbones, the correction consistently lowers paired perplexity; on Llama-3-8B
it reduces WikiText-2 PPL from 7.708 to 7.147, raises MMLU by 2.89 points, and
improves five-shot GSM8K from 23.96\% to 36.16\%. Matched-input, cross-corpus,
and cross-pipeline studies connect these gains to the residual conditioning
signal, while deployment measurements quantify the additional latency and
memory. The evidence supports activation residuals as a useful complementary
signal for repairing an already calibrated low-bit model, with the attainable
gain depending on the backbone, task, and quantization path.
\endgroup

\clearpage
\section*{AI Use Statement}
Generative AI tools assisted with research discussion, experiment planning,
code implementation, figure preparation, prose editing, and checks of reported
results. The authors reviewed the assisted work against source code, raw
outputs, and cited literature and take responsibility for the paper's content.
\bibliographystyle{iclr2027_conference}
\bibliography{refs}
\appendix
\section*{Appendix}
\FloatBarrier
\section{Closed-form solution and fit diagnostics}
\label{app:proof}
Let $J(B,b)$ denote the objective in Eq.~\eqref{eq:objective}, with $B=UV$.
Define
\[
\bar d=\frac{1}{T}\sum_{t=1}^{T}d_t,
\qquad
\bar e=\frac{1}{T}\sum_{t=1}^{T}e_t,
\]
and let $\widetilde d_t=d_t-\bar d$ and
$\widetilde e_t=e_t-\bar e$. Minimizing over the unpenalized bias first gives
\begin{equation}
b=\bar e-B\bar d.
\label{eq:optimal_intercept}
\end{equation}
The input-dependent fit is described by the centered residual covariance and
residual--error cross-covariance
\begin{equation}
G=\sum_{t=1}^{T}\widetilde d_t\widetilde d_t^\top,
\qquad
C=\sum_{t=1}^{T}\widetilde e_t\widetilde d_t^\top.
\label{eq:moments}
\end{equation}
These statistics can be accumulated in a streaming pass, so individual
calibration vectors need not be retained.

Define the centered target energy
$E=\sum_{t=1}^{T}\|\widetilde e_t\|_2^2$ and
$G_\lambda=G+\lambda I_n$. After eliminating the intercept, the remaining
objective is
\begin{equation}
J(B,\bar e-B\bar d)
=E-2\operatorname{tr}(BC^\top)+\operatorname{tr}(BG_\lambda B^\top).
\label{eq:centered_objective}
\end{equation}
Here $\operatorname{tr}$ is the matrix trace. Since $\lambda>0$,
$G_\lambda$ is positive definite, and the unrestricted ridge solution is
\begin{equation}
B_\star=CG_\lambda^{-1}.
\label{eq:unrestricted_ridge}
\end{equation}
Factor $G_\lambda=LL^\top$ with a lower-triangular Cholesky factor $L$.
Completing the square gives
\begin{equation}
J(B,\bar e-B\bar d)
=E-\|B_\star L\|_F^2+\|(B-B_\star)L\|_F^2.
\label{eq:completed_square}
\end{equation}
The invertible change of variables $Z=BL$ preserves rank. Moreover,
\begin{equation}
B_\star L=CL^{-\top}.
\label{eq:whitened_map}
\end{equation}
Let the compact singular value decomposition be
\begin{equation}
CL^{-\top}=\Phi\Sigma\Psi^\top,
\label{eq:whitened_svd}
\end{equation}
with singular values in decreasing order. Retaining the leading $r$
components gives the reduced-rank solution
\begin{equation}
U=\Phi_{:,1:r}\Sigma_{1:r,1:r},\qquad
V=\Psi_{:,1:r}^{\top}L^{-1},\qquad
b=\bar e-UV\bar d.
\label{eq:truncate}
\end{equation}
Thus the rank constraint keeps the directions that explain the greatest
reduction in the regularized output-error objective, rather than those with
the largest residual magnitude. This is the reduced-rank regression
construction~\citep{izenman1975rrr} under a ridge-adjusted metric.

In implementation, neither $G_\lambda^{-1}$ nor $L^{-1}$ is formed
explicitly. Cholesky and triangular solves are used to obtain
$CL^{-\top}$ and $V$, improving numerical stability.
The relative ridge rule $\lambda=\gamma\operatorname{tr}(G)/n$,
with $\gamma>0$, gives a positive coefficient whenever
$\operatorname{tr}(G)>0$. If $G=0$, all centered residuals vanish
and hence $C=0$. In this degenerate case, we define the fitted
correction by $B=0$ and $b=\bar e$, without a Cholesky factorization.
This is a minimizer of Eq.~\eqref{eq:objective} for every positive
ridge coefficient, since the centered data provide no
input-dependent variation to fit.

Let $\sigma_i$, $i=1,\ldots,s$, be the ordered singular values on the
diagonal of $\Sigma$, and let $J_r$ be the minimum objective at rank $r$.
For $E>0$, define the normalized regularized fit reduction $\rho_r$ by
\begin{equation}
 \rho_r=\frac{\sum_{i=1}^r\sigma_i^2}{E},\qquad
 J_r=E(1-\rho_r),\qquad 0\le\rho_r\le1.
\label{eq:rho}
\end{equation}
The numerator is the regularized objective reduction from the retained
singular components. The bounds follow from nonnegativity of the objective
and the squared singular values.
The penalty acts on $B$, not independently on $U,V$. The unpenalized calibration
error reduction of this fitted map is $E\rho_r+\lambda\|B\|_F^2$;
$\rho_r$ therefore describes the regularized objective. Reported model-level
values are arithmetic means over repaired layers, not energy-weighted global
bounds. If $E=0$, the ratio is undefined; the constant bias $b=\bar e$ suffices
on the calibration samples. Tied singular values may make the rank-constrained minimizer nonunique.

\subsection{Error scale and held-out correction}
\label{app:scale}
Let $\bar y_{\rm ref}=T^{-1}\sum_{t=1}^T y_{{\rm ref},t}$ and
$H=\sum_{t=1}^T\|y_{{\rm ref},t}-\bar y_{\rm ref}\|_2^2>0$ be the
centered reference-output energy. The scale-aware quantity
$\alpha_r=E\rho_r/H$ combines target energy and normalized fit reduction. Under the idealized rescaling
$(d_t,e_t)\mapsto(\epsilon d_t,\epsilon e_t)$ and the implemented relative
ridge $\lambda=\gamma\operatorname{tr}(G)/n$, with $\gamma>0$,
$\epsilon\ne0$, and $\operatorname{tr}(G)>0$, the matrices
$G,C$ and $\lambda$ scale by $\epsilon^2$.
Hence $B_\star$ is unchanged,
$B_\star L$ scales by $|\epsilon|$, and $E\rho_r$ scales by $\epsilon^2$.
The ratio $\rho_r$ is invariant, while $\alpha_r$ scales by $\epsilon^2$
for a fixed reference energy $H$. Changing bit width is not generally a
uniform rescaling, but this identity explains why a normalized spectrum
cannot by itself identify large accuracy gains.

For a fitted correction $p_t=UVd_t+b$ on held-out tokens, define
\begin{equation}
\eta_{\rm val}
=\frac{\sum_t\|e_t\|_2^2-\sum_t\|e_t-p_t\|_2^2}
       {\sum_t\|y_{{\rm ref},t}\|_2^2}.
\label{eq:heldout-energy}
\end{equation}
The denominator must be positive. Positive $\eta_{\rm val}$ means lower
held-out layer error, with calibration factors and bias held fixed.
A negative value detects local overcorrection. It is not a bound on language
model loss: errors propagate through later nonlinear layers. This diagnostic
uses validation inputs, separately from the fixed test windows used for PPL.

\subsection{What the capacity controls identify}
\label{app:budget}
The rounded-input, unrounded-input and residual-input branches in
Table~\ref{tab:input} all have $r(n+m)+m$ scalar parameters and $r(n+m)$
factor MACs for each $m\times n$ layer. The unrounded-input branch is fitted
to the same target, so the comparison isolates which input signal is supplied
to an otherwise matched low-rank map.

A channel-restoration control with $k$ selected input channels uses $mk+m$
parameters and $mk$ MACs, excluding index storage and gather operations.
Exact per-layer matching requires $k=r(n+m)/m$. The comparison below chooses
$k$ separately for square and rectangular projections so that its total
parameter and factor-MAC budget equals that of rank-32 ResCom.

The strengthened Llama-3-8B control uses the same 128 training windows and
seed zero as the original fit. For each layer, input-channel indices
$j\in\{1,\ldots,n\}$ are ranked by
$\sum_t d_{t,j}^2\|(W_s)_{:,j}\|_2^2$, including the stored SERQ compensator.
The control restores $(W_s)_{:,I}d_I$ on the selected index set
$I\subseteq\{1,\ldots,n\}$, $|I|=k$, and fits a mean
bias. Query/output projections use $k=64$ and key/value projections use
$k=160$. Both parameterizations use 27,590,656 scalar parameters and
27,262,976 factor MACs per token across the 128 repaired projections;
the channel implementation additionally stores integer indices and gathers
activations. The selected-channel and dense low-rank bases reach 7.1351 and
7.1465 PPL, respectively. Their similar performance shows that the benefit of
the residual signal is not tied to one basis: the dense form provides a
fixed-rank factorization, while the sparse form provides coordinate-level
selection with indexed access.

\subsection{Proof of Proposition~\ref{prop:residual_alignment}}
\label{app:residual-alignment-proof}

\begin{proof}
Fix the calibration samples and an integer rank budget
$0\le r\le\min(m,n,T)$.
We first eliminate the intercept, then derive a lower bound that
applies to each candidate input, and finally show that residual
input attains this bound.

\paragraph{Centering the inputs and targets.}
For any input sequence $z\in\{x,q,d\}$, define
\[
\bar z=\frac{1}{T}\sum_{t=1}^{T}z_t,\qquad
\bar e=\frac{1}{T}\sum_{t=1}^{T}e_t,\qquad
\tilde z_t=z_t-\bar z,\qquad
\tilde e_t=e_t-\bar e.
\]
Let
\[
Z=[\tilde z_1,\ldots,\tilde z_T],\qquad
F=[\tilde e_1,\ldots,\tilde e_T].
\]
For fixed $B$, the prediction error decomposes as
\[
e_t-Bz_t-b
=
(\tilde e_t-B\tilde z_t)+(\bar e-B\bar z-b).
\]
The first term sums to zero over the calibration samples.
Consequently, the cross term vanishes when the squared errors
are summed, giving
\[
\sum_{t=1}^{T}\|e_t-Bz_t-b\|_2^2
=
\|F-BZ\|_F^2+T\|\bar e-B\bar z-b\|_2^2.
\]
The minimizing intercept is therefore
\[
b=\bar e-B\bar z,
\]
and the minimum unregularized correction error becomes
\[
\mathcal R_r(z)=
\min_{\operatorname{rank}(B)\le r}\|F-BZ\|_F^2.
\]
The same centering identity applies to the regularized objective,
because its penalty depends on $B$ and leaves the intercept
unpenalized.

\paragraph{A common lower bound for every input.}
For every admissible $B$,
\[
\operatorname{rank}(BZ)
\le\operatorname{rank}(B)\le r.
\]
Thus, the centered prediction $BZ$ belongs to the set of matrices
of rank at most $r$. Minimizing over all matrices in this larger
set gives a lower bound on the error achievable from any
particular input sequence:
\[
\|F-BZ\|_F^2
\ge
\min_{\operatorname{rank}(M)\le r}\|F-M\|_F^2
=
\sum_{i>r}s_i^2.
\]
The equality is the best rank-$r$ approximation formula from the
Eckart--Young--Mirsky theorem, where $s_i$ are the singular values
of $F$ in nonincreasing order. It follows that
\[
\mathcal R_r(z)\ge\sum_{i>r}s_i^2,\qquad z\in\{x,q,d\}.
\]

\paragraph{Attaining the lower bound with residual input.}
The activation-rounding target satisfies $e_t=W_sd_t$ for every
calibration sample. Averaging this identity gives
\[
\bar e=W_s\bar d.
\]
Subtracting the means therefore yields
\[
\tilde e_t=W_s(d_t-\bar d)=W_s\tilde d_t.
\]
With
\[
D=[\tilde d_1,\ldots,\tilde d_T],
\]
we obtain the centered relation
\[
F=W_sD.
\]

Choose an ordered singular value decomposition of $F$, and let
$P_r$ be the orthogonal projector onto its leading $r$ left
singular directions, with $P_0=0$. The matrix $P_rF$ retains
the leading $r$ singular components of $F$ and satisfies
\[
\|F-P_rF\|_F^2=\sum_{i>r}s_i^2.
\]
Consider the residual-input correction
\[
B_0=P_rW_s,\qquad b_0=\bar e-B_0\bar d.
\]
This correction is admissible because
\[
\operatorname{rank}(B_0)
=\operatorname{rank}(P_rW_s)
\le\operatorname{rank}(P_r)\le r.
\]
Its centered prediction is exactly the truncated reconstruction:
\[
B_0D=P_rW_sD=P_rF.
\]
Using the centering identity,
\[
\sum_{t=1}^{T}\|e_t-B_0d_t-b_0\|_2^2
=
\|F-P_rF\|_F^2
=
\sum_{i>r}s_i^2.
\]
Residual input therefore attains the common lower bound:
\[
\mathcal R_r(d)=\sum_{i>r}s_i^2.
\]
Combining this equality with the bounds for $z=x$ and $z=q$
proves Eq.~\eqref{eq:residual_rank_optimum}:
\[
\mathcal R_r(d)=\sum_{i>r}s_i^2
\le\min\{\mathcal R_r(x),\mathcal R_r(q)\}.
\]

\paragraph{The bound for the regularized fit.}
Let $(\widehat B,\widehat b)$ minimize the regularized objective
in Eq.~\eqref{eq:objective}, with the same rank constraint and $\lambda>0$.
Because the intercept is unpenalized, the centering identity gives
\[
\widehat b=\bar e-\widehat B\bar d.
\]
The pair $(B_0,b_0)$ constructed above is feasible for this
regularized problem. Optimality of
$(\widehat B,\widehat b)$ therefore implies
\[
\begin{aligned}
&\sum_{t=1}^{T}\|e_t-\widehat B d_t-\widehat b\|_2^2
 +\lambda\|\widehat B\|_F^2
\\
&\qquad\le
\sum_{t=1}^{T}\|e_t-B_0d_t-b_0\|_2^2
 +\lambda\|B_0\|_F^2
\\
&\qquad=
\mathcal R_r(d)+\lambda\|P_rW_s\|_F^2.
\end{aligned}
\]
The penalty on the left is nonnegative. Removing it gives
\[
\sum_{t=1}^{T}\|e_t-\widehat B d_t-\widehat b\|_2^2
\le
\mathcal R_r(d)+\lambda\|P_rW_s\|_F^2,
\]
which is Eq.~\eqref{eq:residual_ridge_bound}.

The argument also covers the boundary cases.
For $r=0$, the construction gives $B_0=0$,
$b_0=\bar e$, and $\mathcal R_0(d)=\|F\|_F^2$.
When $r\ge\operatorname{rank}(F)$, the discarded singular
values are zero and $P_rF=F$.
If singular values are tied or zero, any ordered orthonormal
singular-vector basis defines a valid $P_r$ and gives the same
reconstruction error; the regularized bound holds for that
chosen projector.
In particular, if $F=0$, the correction $B=0$, $b=\bar e$
attains zero fitting error and zero ridge penalty.
\end{proof}

\FloatBarrier
\section{Post-calibration repair versus residual-aware weight splitting}
\label{app:priorart}
ARHQ~\citep{wang2026arhq} and ResCom both preserve information unavailable from
the rounded activation alone, but they solve different integration problems.
ARHQ starts from a full-precision weight
$W\in\mathbb R^{m\times n}$, chooses a rank-$r$ component
$W_{\rm split}=B_aA_a^\top$ with
$A_a\in\mathbb R^{n\times r}$ and $B_a\in\mathbb R^{m\times r}$, and then
quantizes the modified remainder. With weight quantizer $Q_w$ and the shared
bias omitted, its deployed layer is
\begin{equation}
\hat y_{\rm ARHQ}
=Q_w(W-W_{\rm split})Q(x)+W_{\rm split}x.
\label{eq:arhq_operator}
\end{equation}
The high-precision branch is therefore coupled to a newly quantized main path.

ResCom instead starts from an already calibrated effective map $W_s$ and keeps
that map fixed. Its deployed layer is
\begin{equation}
\hat y_{\rm RC}=W_sQ(x)+Bd+b,\qquad d=x-Q(x),
\label{eq:rescom_operator}
\end{equation}
where $B=UV$ is fitted to the activation-only target $e=W_sd$
in the default SERQ setting. The deployed residual-conditioned
branch keeps the calibrated main operator fixed.
The same reduced-rank regression procedure also supports other
fixed output-error targets. Our PrefixQuant and FlatQuant extensions
use references that include weight-side error, as detailed in
Appendix~\ref{app:repro}, without repeating their weight calibration.

An algebraic expansion makes the distinction explicit. Adding and subtracting
$Q_w(W)Q(x)$ in Eq.~\eqref{eq:arhq_operator} gives
\begin{equation}
\hat y_{\rm ARHQ}
=Q_w(W)Q(x)+W_{\rm split}d+\Delta_wQ(x),
\qquad
\Delta_w=Q_w(W-W_{\rm split})+W_{\rm split}-Q_w(W).
\label{eq:arhq_expansion}
\end{equation}
The term $W_{\rm split}d$ exposes a residual-dependent component, while
$\Delta_wQ(x)$ records the accompanying change to the quantized main path.
Because quantization is nonlinear, optimizing the split and requantizing the
remainder generally changes both terms together. ResCom has no corresponding
main-path change: it learns $B$ from the frozen host layer's observed
residual-to-output relation. Thus the two methods differ in optimization stage,
branch input, fitted target, and whether deployment replaces the calibrated
main operator; their shared access to unrounded information does not make their
deployed corrections equivalent.

\section{Reproduction and implementation details}
\label{app:repro}
For completeness, the signal limitation used in the introduction can be
stated directly. For two inputs $x_1,x_2\in\mathbb R^n$ with
$Q(x_1)=Q(x_2)$ but $W_s(x_1-x_2)\ne0$, any deterministic correction
conditioned only on $Q(x)$ must be identical for both inputs, although their
required corrections differ. Residual input distinguishes them. This is a
statement about the information available to the branch; an unrounded-input
branch also has that information and is tested as a matched-rank control.

The representation $\hat y=W_sQ(x)+c$ in Eq.~\eqref{eq:layer}
assumes that the frozen main path and any existing linear
compensator act on the same quantize--dequantize input $Q(x)$.
All weights and activations are expressed in the coordinate system
of the calibrated input quantizer, including any host scaling,
rotation, or permutation.
For the SERQ instantiation,
$W_s=\widehat W+RS$, with $W_c=RS$, where
$S\in\mathbb R^{k\times n}$ selects $k$ salient channels.
Here $\widehat W\in\mathbb R^{m\times n}$ and
$R\in\mathbb R^{m\times k}$ denote the fixed numerical weight
values used by the deployed paths, including dequantization
where applicable. The combined matrix $W_s$ is never materialized
for inference.

This single-map representation does not automatically apply
to hosts with separate activation quantizers for different paths
or an additional output quantizer. In such cases, the target must
be defined from the actual frozen reference and host operators,
rather than assuming $e=W_sd$.
For a linear reference in the same coordinates, we write
$y_{\rm ref}=W_{\rm ref}x+c$ with
$W_{\rm ref}\in\mathbb R^{m\times n}$ and define
$e=y_{\rm ref}-\hat y$.

For the main SERQ experiment, the GPTQ checkpoint is loaded without another
weight-quantization call. Its unmodified WikiText-2 perplexity is
7.7077 and its C4 perplexity is 11.1922. The original
attention-only repair gives 7.1465 on WikiText-2. The fit reference
includes SERQ's stored selected-column compensator. All repaired layers use
phase-three asymmetric activation quantization in the evaluated configuration.

All calibration statistics are collected from the frozen,
unrepaired host model before any ResCom branch is attached.
This choice defines a common calibration reference across layers.
After the branches are attached, upstream corrections can change
the activations and rounding residuals encountered by downstream
layers. The closed-form solution is therefore optimal for the
specified layerwise objective on the collected calibration inputs;
it does not guarantee optimality under the input distribution
induced by the augmented model, or a reduction in end-to-end
language-model loss.

Uncentered moments and means are accumulated and then centered at solve time.
In the SERQ runs, products are computed in FP32, accumulated into FP64
statistics, and solved in FP64 before factors are cast to the loaded model
dtype. The artifact-bound timing run confirms FP32 loading for the existing
accuracy implementation. The 16-bit effective-width convention in
Table~\ref{tab:main} is therefore nominal storage accounting, not its in-memory
runtime footprint. For Llama-3-8B, the attention-only branch contains
27,590,656 parameters including biases: 52.625 MiB at 16 bits and 105.25 MiB
at 32 bits. Its factor MAC count is 0.3906\% of decoder-block linear MACs;
the all-linear variant is 1.2019\%. The ratios are model-specific.

The original attention artifact required 252.8 seconds for collection and
fitting on one B200, including 142.1 seconds of collection. Spectrum
computation in subsequent diagnostic runs adds work. The cost is a forward
calibration pass with dense moment accumulation and a factorization per layer,
not merely an ordinary inference pass.

The activation-only target in Eq.~\eqref{eq:layer} is the default
SERQ setting, with $W_{\rm ref}=W_s$ and hence $e=W_sd$.
PrefixQuant and FlatQuant instead use their pre-rounding weight
reference, including weight-side error. For a frozen host output
$\hat y=W_sQ(x)+c$, their target is
\[
e=W_{\rm ref}x-W_sQ(x)
 =W_sd+(W_{\rm ref}-W_s)x,
\qquad d=x-Q(x).
\]
These are extensions of the calibration target, while retaining
the same residual-conditioned branch and frozen host parameters.
The regression objective in Eq.~\eqref{eq:objective} and the
closed-form solution in Appendix~\ref{app:proof} remain valid for
these fixed targets; their derivation does not require $e=W_sd$.
By contrast, Proposition~1 relies on $e_t=W_sd_t$ and is claimed
only for the activation-only setting. We do not extend its
residual-alignment guarantee to targets containing weight-side
error.

PrefixQuant output-only rank-64 repair gives 7.6046.
FlatQuant's 6.9600 uses output and down projections; its all-linear
variant instead gives 6.9790. These distinct configurations are
kept separate in the comparisons. The PrefixQuant and FlatQuant
runs use FP32 moment accumulators.

\FloatBarrier
\section{Activation-width results}
\label{app:offdomain}
The paired WikiText-2/C4 results across Llama and Mistral backbones are reported
in the main-paper Table~\ref{tab:backbones}. The C4 documents are fixed across
comparisons and unseen during WikiText-2 calibration.
\subsection{Within-backbone width sweep}
\label{app:bits}
\begin{table}[!htbp]
\centering\small
\caption{Activation-width sweep on Llama-3-8B at fixed quantized weights. The sixteen-bit row is the same quantized host with activation quantization disabled, not the original full-precision model.}
\label{tab:bits}
\begin{tabular}{lccc}
\toprule
Activation width & Pipeline & \method{} & change \\
\midrule
4 bits  & 7.708 & \textbf{7.147} & $-7.28\%$ \\
6 bits  & 6.552 & 6.545 & $-0.10\%$ \\
8 bits  & 6.503 & 6.502 & $-0.01\%$ \\
16 bits & 6.499 & \tbd  & $0$ \\
\bottomrule
\end{tabular}
\end{table}

The A16 control retains quantized weights; it is not the original FP16 model.
For Llama-3-8B it gives 6.4991 perplexity. The rank-32 A4 repair recovers about
46.4\% of the A16-to-A4 perplexity gap. This is a ratio of measured perplexities,
not a layerwise error-energy fraction.
\subsection{Cross-backbone activation-width analysis}
\label{app:a8}
Table~\ref{tab:a8} compares the same checkpoints, repair rank, and attachment
sites at four- and eight-bit activation precision.
\begin{table}[!htbp]
\centering\small
\caption{Activation-width sensitivity with SERQ. We retain the weights,
repair rank, and attachment sites while changing the activation width.
Each relative change is measured against SERQ at the same activation precision.}
\label{tab:a8}
\begin{tabular}{lccccccc}
\toprule
& \multicolumn{4}{c}{Four-bit activations} & \multicolumn{3}{c}{Eight-bit activations} \\
\cmidrule(lr){2-5}\cmidrule(lr){6-8}
Backbone & SERQ & \method{} & change & $\rho_{32}$
         & SERQ & change & $\rho_{32}$ \\
\midrule
Llama-3-8B       & 7.708 & 7.147 & $-7.28$\% & 0.294 & 6.503 & $-0.01$\% & 0.289 \\
LLaMA-2 7B       & 5.939 & 5.847 & $-1.55$\% & 0.271 & 5.594 & $-0.01$\% & 0.265 \\
LLaMA-2 13B      & 5.227 & 5.148 & $-1.51$\% & 0.270 & 4.977 & $-0.01$\% & 0.261 \\
LLaMA-3.2 3B     & 9.177 & 8.964 & $-2.32$\% & 0.282 & 8.185 & $-0.02$\% & 0.282 \\
LLaMA-3.2 1B     & 12.461 & 12.071 & $-3.13$\% & 0.355 & 10.411 & $-0.02$\% & 0.356 \\
Mistral-7B-v0.3  & 5.663 & 5.617 & $-0.81$\% & 0.343 & 5.405 & $-0.01$\% & 0.341 \\
\bottomrule
\end{tabular}
\end{table}

All six backbones in Table~\ref{tab:a8} have less than 0.1\% relative
perplexity change at A8, whereas A4 produces the larger paired reductions.
The near-preserved $\rho_{32}$ values show that normalized fit quality alone
does not determine model-level gain: when activation rounding is already
small, the absolute correction target is also small. The complete
additional-family and generation measurements appear in
Tables~\ref{tab:additional_a8} and~\ref{tab:generation_a8}.

\FloatBarrier
\section{Attachment sites and calibration sensitivity}
\label{app:sites}
\begin{table}[!htbp]
\centering\small
\caption{Attachment-site and rank ablations on Llama-3-8B. Rank zero means a fitted bias only. Dashes denote unmeasured configurations. Site groups differ in parameter count, so they are not equal-budget comparisons.}
\label{tab:sites}
\begin{tabular}{llcccc}
\toprule
Pipeline & Repaired layers & $r=0$ & $r=16$ & $r=32$ & $r=64$ \\
\midrule
SERQ & all linear layers      & \tbd & \tbd & \textbf{7.070} & \tbd \\
SERQ & attention projections & 7.715 & 7.167 & 7.147 & 7.128 \\
SERQ & down projection       & \tbd & \tbd & 7.624 & \tbd \\
SERQ & output projection     & \tbd & \tbd & 7.672 & \tbd \\
SERQ & gate and up projections & \tbd & \tbd & 7.688 & \tbd \\
PrefixQuant & output projection  & 7.790 & \tbd & 7.646 & \textbf{7.605} \\
\bottomrule
\end{tabular}
\end{table}

The attention, down, and gate/up groups are fitted against the same unrepaired
model, then loaded together. The combined Llama-3-8B artifact contains 224
branches and yields 7.0697 WikiText-2 and 10.2251 C4
perplexity, 58.16 MMLU, and 64.93 zero-shot average. Its larger parameter and
MAC budget separates it from the default attention-only configuration.

\subsection{Calibration draws and sample count}
\label{app:seeds}
\begin{table}[!htbp]
\centering\small
\caption{Calibration-draw variability with test windows fixed. Positive $\Delta$ is host minus repaired perplexity. Standard deviations describe three or four draws as indicated by the final column; they are not test-set confidence intervals.}
\label{tab:seeds}
\begin{tabular}{lccccc}
\toprule
Backbone & Control & Repaired, mean & $\Delta$ mean & $\Delta$ s.d. & Helps on \\
\midrule
Llama-3-8B       & 7.708 & 7.147 & $+0.5604$ & 0.0059 & 4/4 \\
LLaMA-2 7B       & 5.939 & 5.847 & $+0.0916$ & 0.0030 & 3/3 \\
Mistral-7B-v0.3  & 5.663 & 5.620 & $+0.0431$ & 0.0024 & 4/4 \\
\bottomrule
\end{tabular}
\end{table}

Evaluation windows are fixed across calibration draws. The positive
perplexity effects on Llama-3-8B, LLaMA-2 7B, and Mistral preserve their sign
across the completed draws.
These standard deviations measure fit variability, not paired downstream-task
confidence intervals.
\begin{table}[h]
\centering\small
\caption{Calibration sample-count sensitivity on Llama-3-8B. Rank 32,
attention projections, seed zero; each control reproduces 7.7077.}
\label{tab:calib}
\begin{tabular}{rccc}
\toprule
2048-token windows & WikiText-2 & C4 & Mean $\rho_{32}$\\
\midrule
32 & 7.1387 & 10.3471 & 0.2934\\
128 & 7.1465 & 10.3493 & 0.2942\\
512 & 7.1374 & 10.3522 & 0.2938\\
\bottomrule
\end{tabular}
\end{table}

\paragraph{Large-model and additional-family completion.}
The LLaMA-2-70B comparison uses a separately verified GPTQ checkpoint with
128-channel weight groups and rank-32 corrections on its 320 attention
projections. An independent single-writer reload reproduces WikiText-2
PPL 3.6378/3.5975 and C4 PPL 5.7031/5.6790 before/after correction.
Both paths use FP16 model tensors; these are accuracy evaluations rather than
timing comparisons with the smaller FP32 simulation models. The 70B panel in
Table~\ref{tab:main} reports the completed language-modeling, eight-task
zero-shot, and MMLU evaluations. Gemma-3-1B/4B likewise use complete paired
evaluations for every metric reported in the paper.

\section{Downstream and generation evaluation}
\label{app:accuracy}
The eight-task suite includes PIQA for physical commonsense, SIQA for social
reasoning, ARC-Easy/Challenge and OpenBookQA for science questions, and
HellaSwag, WinoGrande, and BoolQ for contextual reasoning and reading
comprehension~\citep{bisk2020piqa,sap2019socialiqa,clark2018arc,mihaylov2018openbookqa,zellers2019hellaswag,sakaguchi2020winogrande,clark2019boolq}.
The eight-task mean is calculated before rounding the task scores. Original
and incremental task runs share raw PIQA scores and repair-layer counts as an
overlap check. All 57 MMLU subjects are required; the question counts sum to
14,042. LAMBADA~\citep{paperno2016lambada} is reported separately: on
Llama-3-8B, 71.92 for SERQ and 74.11 for attention repair.

The generation study uses paired SERQ/ResCom runs on two Llama and two Gemma
backbones. Each LongBench task has all 200/200/500 examples respectively, with paired
prompt hashes and answer lists matching exactly. The newly integrated Gemma results were independently reaggregated from
7,200 saved per-example LongBench metrics before rounding. Each GSM8K run covers
1,319 questions. Gemma-3-1B downstream measurements cover the complete eight
tasks and all 57 MMLU subjects (14,042 questions).
The Gemma-3-4B text-only extension is evaluated from its native multimodal
checkpoint only after an output-parity check of the extracted text model.

GSM8K uses five shots and both strict exact-match and flexible-extraction
filters from the evaluation harness. The completed Llama-3-8B W4A4 runs give
$316/1319$ and $477/1319$ strict matches. Their individual standard errors
are 1.18 and 1.32 percentage points. The exact difference of the unrounded
rates is 12.21 points. LongBench uses its original benchmark, not LongBench-v2:
QMSum and SAMSum use ROUGE-L F1 and RepoBench-P uses its official first-line
code-similarity metric. The extension uses official prompts, greedy generation,
200/200/500 examples respectively, and a recorded 7,500-token input budget
with middle truncation. The generation limits are 512/128/64 tokens.
This input budget is an explicit implementation choice; SERQ's table alone
does not specify every prompt/truncation detail needed for an exact match.

\begin{table}[!htbp]
\centering\scriptsize
\setlength{\tabcolsep}{2.8pt}
\caption{Zero-shot task details for the principal W4A4 comparisons in
Table~\ref{tab:main}. $\dagger$ marks values published by
SERQ~\citep{park2026serq}; the unmarked SERQ and ResCom rows are our paired
checkpoint evaluations under the common harness. Averages are formed before
rounding.}
\label{tab:accuracy}
\begin{tabular}{llrrrrrrrrr}
\toprule
Backbone & Method & ARC-c & ARC-e & BoolQ & HellaS. & OBQA & PIQA & SIQA & WinoG. & Avg.\\
\midrule
\multirow{7}{*}{LLaMA-2 7B}
 & Full precision$^\dagger$ & 46.25 & 74.49 & 77.74 & 76.00 & 44.20 & 79.05 & 46.11 & 68.90 & 64.09\\
 & SmoothQuant$^\dagger$ & 27.82 & 25.97 & 44.16 & 26.39 & 27.20 & 49.95 & 34.49 & 47.51 & 35.44\\
 & SmoothQuant, g128$^\dagger$ & 39.85 & 65.40 & 68.59 & 67.99 & 37.80 & 74.48 & 41.45 & 61.64 & 57.15\\
 & QuaRot$^\dagger$ & 43.00 & 71.59 & 72.84 & 72.30 & 41.60 & 77.20 & 33.06 & 64.64 & 59.53\\
 & SpinQuant$^\dagger$ & 43.60 & 71.30 & 73.80 & 73.20 & 40.40 & 76.00 & 44.10 & 65.40 & 61.00\\
 & SERQ & 43.26 & 71.97 & 75.47 & 74.05 & 44.00 & 77.91 & 45.04 & 68.43 & 62.52\\
 & ResCom & 42.06 & 71.30 & 76.30 & 74.31 & 43.40 & 77.31 & 45.19 & 68.51 & 62.30\\
\midrule
\multirow{7}{*}{LLaMA-2 13B}
 & Full precision$^\dagger$ & 49.23 & 77.53 & 80.61 & 79.38 & 45.20 & 80.52 & 47.39 & 72.38 & 66.53\\
 & SmoothQuant$^\dagger$ & 27.73 & 26.81 & 38.47 & 25.72 & 25.80 & 49.24 & 34.70 & 47.75 & 34.53\\
 & SmoothQuant, g128$^\dagger$ & 44.45 & 70.50 & 75.23 & 72.52 & 39.00 & 76.93 & 43.19 & 68.43 & 61.28\\
 & QuaRot$^\dagger$ & 46.25 & 73.70 & 78.47 & 76.29 & 43.00 & 78.89 & 33.32 & 70.48 & 62.55\\
 & SpinQuant$^\dagger$ & 49.00 & 76.30 & 78.20 & 77.10 & 42.80 & 79.30 & 46.30 & 69.50 & 64.80\\
 & SERQ & 49.40 & 76.14 & 80.95 & 78.62 & 46.00 & 79.49 & 46.93 & 71.11 & 66.08\\
 & ResCom & 49.66 & 76.22 & 81.44 & 78.41 & 45.20 & 79.65 & 46.72 & 72.45 & 66.22\\
\midrule
\multirow{7}{*}{Llama-3-8B}
 & Full precision$^\dagger$ & 52.99 & 77.69 & 81.07 & 79.20 & 45.00 & 80.74 & 47.08 & 73.48 & 67.16\\
 & SmoothQuant$^\dagger$ & 26.02 & 24.20 & 50.61 & 26.65 & 28.20 & 51.69 & 33.78 & 49.57 & 36.34\\
 & SmoothQuant, g128$^\dagger$ & 30.38 & 46.59 & 61.83 & 55.61 & 33.60 & 64.20 & 39.92 & 59.59 & 48.97\\
 & QuaRot$^\dagger$ & 43.26 & 69.57 & 70.49 & 72.22 & 42.80 & 77.04 & 32.96 & 64.64 & 59.12\\
 & SpinQuant$^\dagger$ & 46.90 & 72.00 & 73.40 & 71.90 & 42.40 & 75.20 & 44.40 & 67.70 & 61.75\\
 & SERQ & 45.22 & 71.30 & 78.23 & 73.93 & 41.80 & 76.88 & 45.24 & 71.27 & 62.98\\
 & ResCom & 49.32 & 74.45 & 80.00 & 76.35 & 44.60 & 78.02 & 46.88 & 69.53 & 64.89\\
\bottomrule
\end{tabular}
\end{table}

\begin{table}[!htbp]
\centering\scriptsize
\setlength{\tabcolsep}{3pt}
\caption{Paired W4A4 zero-shot task details for the additional backbones in
Tables~\ref{tab:main} and~\ref{tab:additional}. Both rows in each pair are
our evaluations of the same checkpoint before and after repair.}
\label{tab:accuracy_additional}
\begin{tabular}{llrrrrrrrrr}
\toprule
Backbone & Method & ARC-c & ARC-e & BoolQ & HellaS. & OBQA & PIQA & SIQA & WinoG. & Avg.\\
\midrule
Llama-3.2-1B & SERQ & 34.30 & 53.32 & 60.06 & 58.35 & 34.60 & 69.53 & 41.25 & 56.04 & 50.93\\
 & ResCom & 33.11 & 55.22 & 59.45 & 58.84 & 35.20 & 69.59 & 40.84 & 55.88 & 51.02\\
\addlinespace[2pt]
Llama-3.2-3B & SERQ & 40.10 & 67.13 & 70.70 & 69.29 & 37.40 & 75.63 & 45.45 & 66.14 & 58.98\\
 & ResCom & 42.32 & 67.42 & 71.22 & 70.78 & 40.20 & 76.77 & 45.91 & 66.46 & 60.14\\
\addlinespace[2pt]
Mistral-7B-v0.3 & SERQ & 51.71 & 78.96 & 80.92 & 78.39 & 46.60 & 80.85 & 46.06 & 73.32 & 67.10\\
 & ResCom & 52.65 & 78.96 & 80.83 & 79.36 & 45.40 & 81.12 & 46.32 & 72.45 & 67.13\\
\bottomrule
\end{tabular}
\end{table}

\begin{table}[ht]
\centering\small
\setlength{\tabcolsep}{4pt}
\caption{Additional-family W4A8 results. Each ResCom row uses the paired SERQ
checkpoint and rank-32 attention repair. All task pairs are complete; bold compares each paired result.}
\label{tab:additional_a8}
\begin{tabular}{lllrrrr}
\toprule
Model & Bits & Method & WikiText-2 & C4 & 0-shot & MMLU\\
\midrule
Gemma-3-1B-PT &  W4A8  &  SERQ  &  \textbf{15.026}  &  \textbf{19.104}  &  \textbf{55.81}  &  \textbf{25.13}  \\
  &    &  ResCom  &  15.058  &  19.138  &  55.55  &  25.12  \\
\midrule
Gemma-3-4B-PT &  W4A8  &  SERQ  &  \textbf{10.753}  &  \textbf{14.901}  &  65.20  &  \textbf{55.73} \\
 &    &  ResCom  &  11.128  &  15.326  &  \textbf{65.34}  &  55.52 \\
\bottomrule
\end{tabular}
\end{table}

\begin{table}[ht]
\centering\scriptsize
\setlength{\tabcolsep}{3.8pt}
\caption{W4A8 generation results.
$\dagger$: published comparator values reported by SERQ~\citep{park2026serq}; SERQ rows are
our paired checkpoint evaluations. Metrics and prompt settings match the W4A4 study in
Table~\ref{tab:generation}. }
\label{tab:generation_a8}
\begin{tabular}{lllccccc}
\toprule
& & & \multicolumn{2}{c}{GSM8K 5-shot $\uparrow$} & \multicolumn{3}{c}{LongBench $\uparrow$}\\
\cmidrule(lr){4-5}\cmidrule(lr){6-8}
Model & Bits & Method & Flexible & Strict & QMSum & SAMSum & RepoBench-P\\
\midrule
\multirow{4}{*}{Llama-3-8B}
& W4A8 & LLM.int4()$^\dagger$ & 7.88 & 4.62 & 13.29 & 34.20 & 44.93\\
& & L$^2$QER$^\dagger$ & 36.24 & 35.33 & 7.28 & 25.81 & 47.03\\
\cmidrule(l){2-8}
& W4A8 & SERQ & \textbf{43.52} & \textbf{43.14} & 23.28 & 44.20 & \textbf{65.98} \\
\rowcolor{gray!12}
&  & ResCom & 43.37 & 42.91 & \textbf{23.53} & \textbf{44.45} & 65.90 \\
\midrule
\multirow{4}{*}{Llama-3.2-3B}
& W4A8 & LLM.int4()$^\dagger$ & 6.76 & 6.44 & 15.83 & 36.42 & 35.24\\
& & L$^2$QER$^\dagger$ & 18.95 & 18.42 & 21.53 & 40.63 & 60.18\\
\cmidrule(l){2-8}
& W4A8 & SERQ & \textbf{20.17} & \textbf{20.09} & \textbf{23.15} & \textbf{43.95} & 64.77 \\
\rowcolor{gray!12}
&  & ResCom & 19.86 & 19.79 & 22.89 & 43.81 & \textbf{64.94} \\
\midrule
Gemma-3-1B-PT & W4A8 & SERQ & \textbf{2.27} & \textbf{1.74} & 17.94 & 7.16 & \textbf{51.10} \\
\rowcolor{gray!12}
 & & ResCom & 2.12 & 1.59 & \textbf{18.14} & \textbf{7.25} & 50.96 \\
\midrule
Gemma-3-4B-PT & W4A8 & SERQ & \textbf{28.81} & \textbf{28.58} & \textbf{21.62} & \textbf{8.54} & \textbf{65.34} \\
\rowcolor{gray!12}
 & & ResCom & 27.67 & 26.91 & 21.60 & 6.99 & 65.20 \\
\bottomrule
\end{tabular}
\end{table}

\FloatBarrier

\section{Implementation and end-to-end measurements}
\label{app:deploy}

\paragraph{Direct end-to-end comparison.}
Figure~\ref{fig:efficiency_latency} uses direct runs of six Llama-3-8B
configurations on one B200. Every timed model first completes the same 141
WikiText-2 windows. FP16 uses FP16 compute; the quantization-simulation models
use FP32 with TF32 disabled. L$^2$QER retains both factor products and their
intermediate/output quantizers at rank 64 on 224 linear layers. QuaRot uses its
Hadamard rotation, GPTQ, and INT4 simulation. ResCom adds rank-32 factors on
128 attention projections to the SERQ checkpoint. Table~\ref{tab:artifact_efficiency}
reports the underlying measurements.

\begin{table}[!htbp]
\centering\small
\caption{Numerical breakdown for Figure~\ref{fig:efficiency_latency}. B200,
Llama-3-8B, batch one, 4K prompt; milliseconds.}
\label{tab:artifact_efficiency}
\begin{tabular}{lrr}
\toprule
Method & Prefill & Time per output token\\
\midrule
FP16 & 209.07 & 15.89\\
L$^2$QER-A8 & 2600.15 & 414.33\\
L$^2$QER-A4 & 2601.00 & 416.03\\
QuaRot & 1470.75 & 78.70\\
SERQ & 1342.32 & 76.79\\
ResCom & 1372.46 & 86.11\\
\bottomrule
\end{tabular}
\end{table}

\begin{figure}[!htbp]
\centering
\includegraphics[width=\textwidth]{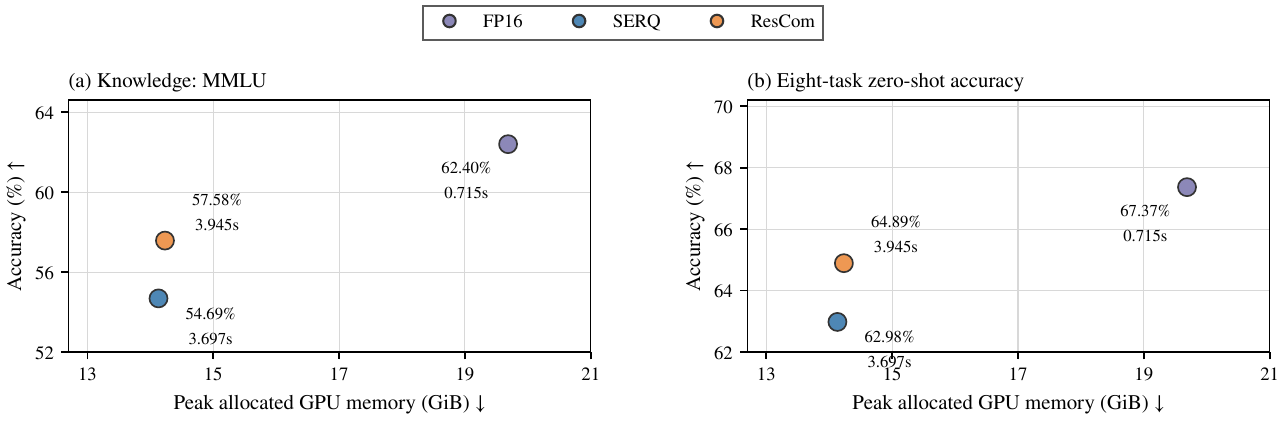}
\caption{Full memory--accuracy views for MMLU and the eight-task mean, using
the measurements summarized in Figure~\ref{fig:efficiency_memory}. Each
point is labeled with its measured end-to-end latency.}
\label{fig:memory_full}
\end{figure}

\begin{figure}[!htbp]
\centering
\includegraphics[width=\textwidth]{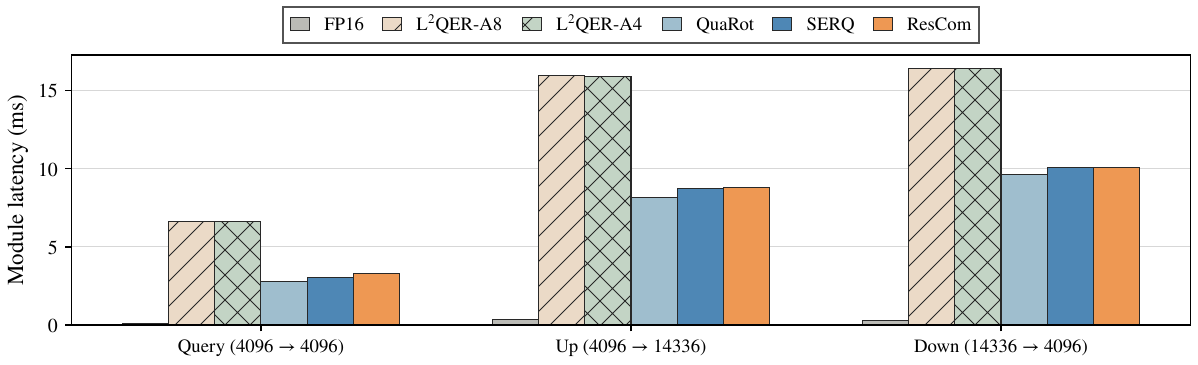}
\caption{Per-module latency for the layer-0 query, up, and down projections of
the six evaluated Llama-3-8B configurations on B200, batch one with 4096 input
tokens. The measurements use each checkpoint's actual inputs and weights and
complement the end-to-end results in Figure~\ref{fig:efficiency_latency}.}
\label{fig:completed_modules}
\end{figure}

\begin{figure}[!htbp]
\centering
\includegraphics[width=\textwidth]{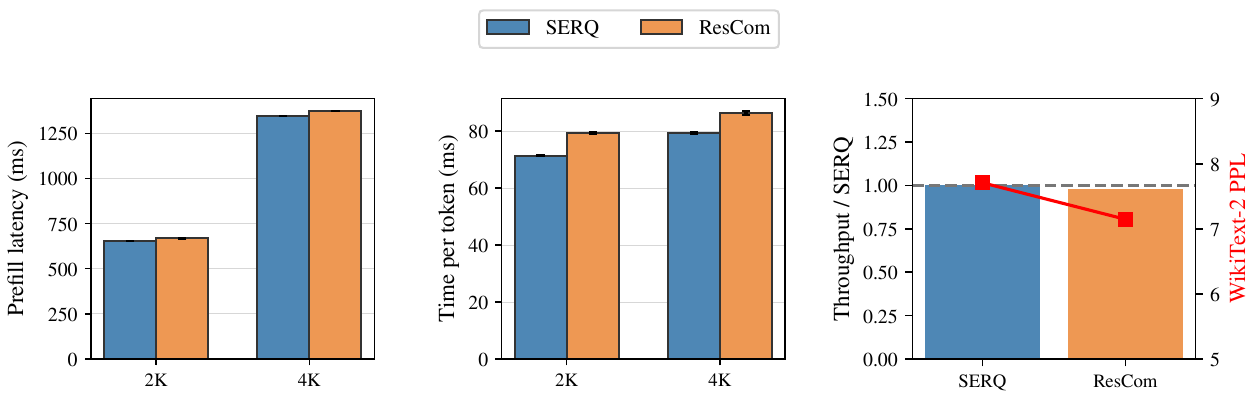}
\caption{Paired SERQ and ResCom latency across prompt lengths on B200, batch
one, with the corresponding WikiText-2 perplexity. Means and sample standard
deviations use three interleaved groups. The shaded difference is the measured
increment from the fitted rank-32 attention branch.}
\label{fig:paired_latency_detail}
\end{figure}

\paragraph{Compact-storage protocol.}
Figure~\ref{fig:efficiency_memory} uses a separate fresh-process measurement
with 4096 input tokens and 32 generated tokens. Peak allocated memory includes
codebooks, factors, KV cache, and workspace. Four-bit weight indices and FP32
per-group codebooks preserve the evaluated weight values; one layer is
dequantized at a time for FP32 GEMM. Lossless weight round-trip and full-model
logit-parity checks connect the packed representation to the evaluated
checkpoint. Three repetitions give peak memory of
19.6862/14.1303/14.2330\,GiB and mean wall time of
0.7150/3.6972/3.9455\,s for FP16/SERQ/ResCom. Timing uses three groups, three
warmups, and five prefill calls per group; cached decoding recreates the prefix
cache and generates exactly 32 tokens.

\paragraph{Fused residual frontend.}
The kernel in Table~\ref{tab:fused_frontend} uses the saved ResCom factors and
captured validation activations. It fuses activation quantization with the
first residual projection, emits two INT4 codes per byte with FP32 scale and
zero point, and reduces group projections before the second factor product.
All twelve shape/token tests reproduce the reference dequantized activations
and keep relative correction error below $2\times10^{-5}$. Buffers are allocated
before timing; each group uses ten warmups and fifty calls.
\begin{table}[!htbp]
\centering\small
\renewcommand{\arraystretch}{0.94}
\caption{Fused residual frontend with saved rank-32 factors on B200. Inputs
are captured Llama-3-8B layer-0 activations; latency is in microseconds and
averaged over three interleaved groups. The fused path also emits packed INT4
codes; main-path GEMM is outside this measurement.}
\label{tab:fused_frontend}
\begin{tabular}{lrrrr}
\toprule
Projection & Tokens & Eager & Fused & Ratio\\
\midrule
q\_proj & 1 & 163.8 & 48.2 & 3.40\\
q\_proj & 16 & 167.9 & 52.1 & 3.22\\
q\_proj & 128 & 167.3 & 52.3 & 3.20\\
q\_proj & 2048 & 288.2 & 164.6 & 1.75\\
k\_proj & 1 & 160.2 & 46.8 & 3.42\\
k\_proj & 16 & 161.6 & 47.2 & 3.42\\
k\_proj & 128 & 168.8 & 51.3 & 3.29\\
k\_proj & 2048 & 282.3 & 155.6 & 1.81\\
o\_proj & 1 & 159.4 & 46.9 & 3.40\\
o\_proj & 16 & 167.9 & 51.2 & 3.28\\
o\_proj & 128 & 168.0 & 51.2 & 3.28\\
o\_proj & 2048 & 289.4 & 164.6 & 1.76\\
\bottomrule
\end{tabular}
\end{table}

\FloatBarrier

\begin{figure}[!htbp]
\centering
\includegraphics[width=\textwidth]{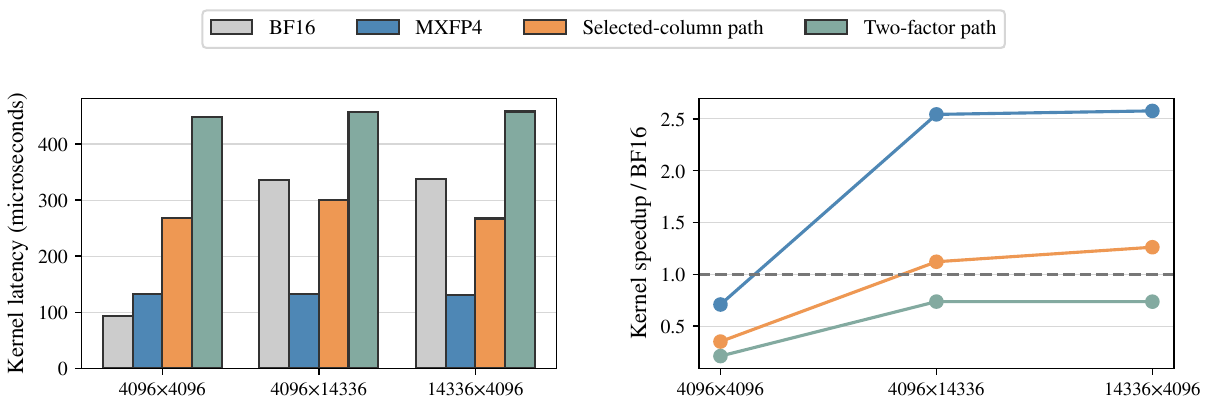}
\caption{Packed kernel-component measurements for the selected-column path
and the two-factor path. The former performs one thin product and the latter
performs two. Shapes are $(K,N)$, with input width $K$ and output width $N$;
BF16 and MXFP4 are the executed dtypes.}
\label{fig:prototype_gemm}
\end{figure}
\FloatBarrier

\paragraph{Native MXFP4 execution.}
A second implementation freshly quantizes Llama-3-8B and refits rank-32 ResCom
factors for native MXFP4. All 224 linear projections invoke native MXFP4 matrix
multiplication. Each fresh process runs batch one with the indicated prompt and
exactly 32 generated tokens. Three repetitions include quantization, padding,
residual computation, factor products, reconstruction, and KV-cache updates;
checkpoint loading and fitting are excluded. Table~\ref{tab:native_mxfp4}
shows the resulting accuracy, end-to-end time, and peak allocation. This
implementation verifies that the correction operates with a native low-bit
main path; the current wrapper is a functional implementation rather than an
optimized serving kernel.

\begin{table}[!htbp]
\centering\small
\caption{Llama-3-8B native MXFP4 execution on B200, batch one and 32 generated
tokens. PPL is full WikiText-2 and shared across prompt lengths; time is the
mean end-to-end seconds of three repetitions.}
\label{tab:native_mxfp4}
\begin{tabular}{llrrr}
\toprule
Prompt & Path & PPL $\downarrow$ & Time (s) $\downarrow$ & Peak (GiB) $\downarrow$\\
\midrule
128 & BF16 & 6.138 & 0.531 & 15.013\\
& MXFP4 & 7.865 & 6.826 & 8.172\\
& MXFP4 ResCom & 7.733 & 7.017 & 8.275\\
2048 & BF16 & 6.138 & 0.592 & 16.568\\
& MXFP4 & 7.865 & 6.854 & 9.588\\
& MXFP4 ResCom & 7.733 & 7.186 & 9.690\\
4096 & BF16 & 6.138 & 0.788 & 20.686\\
& MXFP4 & 7.865 & 7.302 & 11.097\\
& MXFP4 ResCom & 7.733 & 7.571 & 11.200\\
\bottomrule
\end{tabular}
\end{table}

\FloatBarrier
\section{Calibration time}
\label{app:deploy_ablation}
Calibration collects moments and solves the rank-constrained regression once;
task evaluation is excluded. Table~\ref{tab:calibtime} reports complete
rank-32 attention-repair jobs. The independently timed Llama-3.2-3B run takes
184.4 seconds and reproduces the reported 8.9638 PPL, consistent with the
185-second rank-sweep job.

\begin{table}[!htbp]
\centering\small
\caption{One-time rank-32 attention-repair calibration on B200. Times include
moment collection, fitting, and diagnostics; original PTQ and task evaluation
are excluded.}
\label{tab:calibtime}
\begin{tabular}{lrrrr}
\toprule
Model & Llama-3.2 1B & Llama-3.2 3B & LLaMA-2 7B & LLaMA-2 13B\\
\midrule
Fit time (s) & 58 & 185 & 544 & 1353\\
\bottomrule
\end{tabular}
\end{table}

All rows repair the query, key, value, and output projections. Fit time includes
moment collection, factorization, and the diagnostic spectrum used in the
paper. The corresponding control/repair PPL pairs are 9.1769/8.9638 for
Llama-3.2-3B, 5.9389/5.8475 for LLaMA-2-7B, and 5.2274/5.1484 for
LLaMA-2-13B.

\end{document}